\documentclass[10pt]{article}
\usepackage{thuidm}
\usepackage{algorithm}
\usepackage{tabularx}
\newcolumntype{Y}{>{\centering\arraybackslash}X}
\usepackage{amsmath}
\usepackage{bm}
\usepackage{algorithmic}
\usepackage{multirow}
\usepackage{adjustbox}
\usepackage{float}
\usepackage{subcaption}
\usepackage{wrapfig}

\AtEndPreamble{
  \theoremstyle{thuidm}
  \newtheorem{principle}[theorem]{Principle}
}

\newcommand{\K}{\mathcal{K}}
\newcommand{\D}{\mathcal{D}}

\newcommand{\E}{\mathbb{E}}
\newcommand{\T}{\mathcal{T}}

\newcommand{\V}{\mathcal{V}}
\newcommand{\loss}{\mathcal{L}}

\title{What Should LLMs Forget: \\Reinforcement Unlearning of Knowledge Associations}

\author{
  Wutong Xu\aff{1\dag}, Yun Qu\aff{1\dag}, Haiyan Feng\aff{1}, Qi Wang\aff{1}, Xiangyang Ji\aff{1}
}

\affiliation{
  \aff{1}Department of Automation, Tsinghua University, Beijing, China
}

\datestamp{October 30, 2026}
\correspondence{Qi Wang \href{mailto:cheemswang@mail.tsinghua.edu.cn}{cheemswang@mail.tsinghua.edu.cn} and Xiangyang Ji \href{mailto:xyji@tsinghua.edu.cn}{xyji@tsinghua.edu.cn}}

\labwebsite[www.thuidm.com]{https://www.thuidm.com/}

\begin{document}

\begin{abstract}
Large language models (LLMs) acquire extensive factual knowledge during pre-training, which may include sensitive, private, or copyrighted information, motivating the development of unlearning methods that selectively remove specific knowledge while preserving general capabilities.
However, existing approaches typically optimize entire responses without explicitly identifying the knowledge to be forgotten, resulting in coarse and uncontrolled forgetting.
To address this, we revisit the forgetting target of LLM unlearning and introduce Knowledge Association (KA), which characterizes the association between a query context and its target knowledge and is instantiated by the corresponding conditional predictive distribution, as the fundamental target of forgetting.
Based on KA, we propose Knowledge Association Reinforcement Unlearning (KARU), a unified reinforcement learning framework that suppresses forgotten knowledge associations while preserving retained ones.
Specifically, KARU derives complementary mechanisms from the KA-based formulation:
token-level reward construction for finer-grained forgetting supervision, distribution-level regularization for precise knowledge suppression, and proximity-aware retain sampling to mitigate collateral degradation.
Experiments across diverse LLMs on representative benchmarks demonstrate that KARU consistently improves forgetting effectiveness over existing baselines while maintaining comparable retention performance and general utility.
\end{abstract}

\maketitle

\begingroup
\renewcommand{\thefootnote}{}
\footnotetext{\textsuperscript{\dag}\,These authors contributed equally to this work.}
\endgroup

\section{Introduction}
\label{sec:intro}

Large language models (LLMs) acquire extensive factual knowledge during pre-training on vast web-scale corpora~\citep{brown2020language,carlini2022quantifying}. While this knowledge underpins their impressive capabilities across diverse tasks, it also introduces significant security and privacy risks: LLMs may memorize and reproduce sensitive personal information~\citep{carlini2021extractingtrainingdatalarge,parii2025machine}, copyrighted material~\citep{panaitescu2025can,liu2024survey}, or sensitive information related to weapons and cyberattacks~\citep{li2024wmdpbenchmarkmeasuringreducing,bengio2024managing}. These risks are particularly relevant to the Web ecosystem, where large-scale data collection and reuse can expose private or sensitive information through model outputs. They are further underscored by privacy regulations such as the General Data Protection Regulation (GDPR)~\citep{MANTELERO2013229}, which establishes individuals' ``right to be forgotten,'' as well as the growing need for secure, privacy-preserving, and trustworthy AI systems. In response to these challenges, \emph{LLM unlearning}~\citep{jang2022knowledgeunlearningmitigatingprivacy,eldan2023s,yao2024largelanguagemodelunlearning} has emerged as a critical research direction, aiming to mitigate these risks by removing the influence of specific knowledge or training data from LLMs while preserving their overall capabilities.

Various approaches, including supervised optimization, preference learning, and reinforcement learning, have been explored for LLM unlearning~\citep{jang2022knowledgeunlearningmitigatingprivacy,zhang2024npo,fan2025simplicityprevailsrethinkingnegative}. In particular, reinforcement learning (RL) formulates unlearning as a sequential decision-making problem and supports flexible reward design through online policy optimization~\citep{zhang2025rulereinforcementunlearningachieves}. Recent studies have demonstrated the potential of RL-based unlearning~\citep{jin2024rwkubenchmarkingrealworldknowledge,shi2024muse}.

Despite these advances, existing RL-based unlearning methods treat all generated content equally, without explicitly characterizing the target knowledge to be removed. Consequently, optimization signals are often distributed over irrelevant tokens, resulting in imprecise forgetting. Moreover, forgotten knowledge may remain encoded in the predictive distribution, leaving it vulnerable to extraction or reactivation~\citep{patil2023sensitiveinformationdeletedllms,wei2023jailbroken,yuan2025towards}. Meanwhile, retention objectives typically assign uniform importance to retain samples, overlooking their heterogeneous susceptibility to collateral degradation~\citep{dolgova2026forgetting,ha2025unlearning,cheng2026machine}. This observation raises a fundamental question: \emph{what exactly should an LLM forget during unlearning?}

In response to this question, we revisit the nature of unlearning targets and argue that existing methods implicitly optimize the underlying Knowledge Association (KA) rather than the entire generated response.
KA characterizes the association between a query context and its target knowledge and is instantiated by the conditional predictive distribution.
For instance, when unlearning where an author was born, the true target is the model's tendency to reveal the answer given the corresponding query, not the entire response of the forget sample.
Treating KA as the forgetting target, we formulate a unified optimization objective that minimizes the discrepancy between the KAs of the unlearned model and an ideal reference model on forget samples, while preserving KAs on retain samples. From this objective, we propose \textbf{Knowledge Association Reinforcement Unlearning (KARU)}, a unified framework for precise and efficient knowledge association unlearning,
which optimizes the KA objective from complementary gradient, distribution, and retention perspectives: (i) Knowledge Associated Token Reward (KATR), arising naturally from the gradient analysis of the objective and yielding fine-grained optimization signals; (ii) Logits Flattening (LF), which complements KATR by further promoting distribution-level forgetting; (iii) Adaptive Neighborhood Sampling (ANS), which improves retention by prioritizing retain samples more susceptible to collateral degradation during unlearning.

Our contributions are summarized as follows:

\begin{enumerate}
\item We introduce knowledge association as the fundamental forgetting target and formulate a unified KA-based optimization objective for LLM unlearning.

\item We derive complementary optimization mechanisms from the KA objective through complementary gradient and distribution perspectives, enabling fine-grained and distribution-aware knowledge removal.

\item We reveal the locality of collateral degradation and propose adaptive neighborhood sampling to improve the forget--retain trade-off.
\end{enumerate}

We evaluate KARU on representative unlearning benchmarks across diverse LLM backbones and RL algorithms. KARU consistently achieves stronger forgetting while maintaining comparable retention and general capabilities, improving the aggregate forget score by up to 30.5\% over the state-of-the-art (SOTA) baseline. Ablation studies further confirm the effectiveness of each component.

\section{Related Works}
\label{sec:related-works}

\subsection{LLM Unlearning}
LLM unlearning has emerged as a promising solution for removing the influence of undesirable training data, such as copyrighted content, private information, and harmful knowledge, while preserving the model's utility on unrelated tasks~\citep{jang2022knowledgeunlearningmitigatingprivacy, yao2024largelanguagemodelunlearning}.
Existing approaches to LLM unlearning can be broadly categorized into several paradigms. Gradient-based methods optimize against forget examples, often by directly opposing the original training objective~\citep{jang2022knowledgeunlearningmitigatingprivacy}. Preference optimization methods, such as NPO and SimNPO~\citep{zhang2024npo, fan2025simplicityprevailsrethinkingnegative}, formulate unlearning as preference learning between desired and undesired behaviors. Other approaches modify model parameters~\citep{ilharco2023taskvector,wuerkaixi2025adaptive}, edit factual knowledge directly~\citep{wu2023depndetectingeditingprivacy,meng2023locatingeditingfactualassociations}, or explicitly optimize predictive distributions~\citep{entesari2026constrained, zhai2026maximizing,di2024labelsmoothingimprovesmachine,Zhou_Qiang_Zade_Zytko_Khanduri_Zhu_2026}.
Among them, some recent studies have explored more fine-grained objectives for LLM unlearning. TNPO~\citep{wang2025rethinking} and TIF~\citep{Zhou_Qiang_Zade_Zytko_Khanduri_Zhu_2026} move beyond sequence-level optimization toward token-level interventions, while CEU~\citep{entesari2026constrained} highlights the limitations of suppressing target-token probabilities without explicitly accounting for the predictive distribution. Building on these insights, PALU~\citep{zhai2026maximizing} combines localized intervention on sensitive spans with entropy-based distribution flattening. However, extending such fine-grained objectives to diverse model-generated outputs without relying on predefined response-level or manually annotated span-level supervision remains challenging. KARU addresses this challenge by formulating forgetting around knowledge associations and jointly optimizing token-level and distribution-level objectives, enabling fine-grained unlearning across diverse model-generated outputs while preserving related knowledge.

\subsection{Reinforcement Learning for LLMs}
Reinforcement learning (RL) has become a widely adopted paradigm for aligning LLMs with desired behaviors through reward-driven optimization beyond supervised fine-tuning. Recent policy optimization algorithms~\citep{schulman2017proximalpolicyoptimizationalgorithms,shao2024deepseekmathpushinglimitsmathematical,hu2025reinforcestabilizingcriticfreepolicy,mopps, lpo}, have been extensively used for LLM post-training.
A key design choice in RL is reward modeling, where existing approaches mainly differ in supervision granularity. Outcome reward models (ORMs) evaluate only final responses, while process reward models (PRMs) provide intermediate supervision over reasoning steps or token-level decisions~\citep{lightman2023letsverifystepstep,ma2023let,lare}. Although finer-grained supervision enables better credit assignment, it typically requires additional reward models or expensive annotations.
Recently, RL has been explored for LLM unlearning due to its flexible reward-driven optimization and the potential for data-efficient adaptation~\citep{zhang2025rulereinforcementunlearningachieves,liu2024rethinkingmachineunlearninglarge}. However, existing RL-based unlearning methods predominantly rely on sequence-level rewards, which assign identical supervision signals to all tokens and fail to identify knowledge-bearing tokens.

\section{Preliminaries}
\label{sec:preliminaries}

\paragraph{Notation.}
Let $\D=\D_f\cup\D_r$ denote the pre-training corpus, where $\D_f$ and $\D_r$ represent the forget and retain sets respectively. For a query $x$ sampled from $\D$, the language model $\pi_\theta$ pretrained on $\D$ generates a response sequence $y=(y_1,\dots,y_T)$ autoregressively. Here, $y_t$ denotes the token generated at position $t$, $T$ denotes the response length, and $z_t=(x,y_{<t})$ denotes the context for predicting $y_t$. The token vocabulary is denoted by $\mathcal{V}$, where $v\in\mathcal{V}$ represents an individual token.
LLM unlearning aims to remove the impact of $\D_f$ while preserving the capabilities associated with $\D_r$. The ideal unlearned model is considered to be the retrained model $\pi_{\text{ref}}$ trained solely on $\D_r$.

\subsection{Problem Statement}
Most existing unlearning methods~\citep{chen2023unlearn,yao2024largelanguagemodelunlearning,li2025class} formulate unlearning as a supervised optimization problem:
\begin{equation}
\min_\theta \underbrace{\mathbb E_{(x_f,y_f)\sim\mathcal D_f}\bigl[\ell_f(y_f\mid x_f;\theta)\bigr]}_{\text{forget}}+\lambda \underbrace{\mathbb E_{(x_r,y_r)\sim\mathcal D_r} \bigl[\ell_r(y_r\mid x_r;\theta)\bigr]}_{\text{retain}}.
\label{eq:traditional_unlearning}
\end{equation}

While this formulation provides a general optimization framework, it treats all generated tokens uniformly and does not distinguish their different roles during unlearning.

Since current applications of LLM unlearning predominantly involve entity-centric factual knowledge, we focus on this forgetting scenario in this work.
Consider an example. Given the forget query ``What is the title of \textit{Stephen King}'s first published novel?'', the model may generate ``Stephen King's first published novel is \textit{Carrie}.'' Existing sequence-level objectives treat all tokens in the response uniformly, although only the token ``Carrie'' encodes the factual information to be removed. We refer to such tokens that directly express the target knowledge as \emph{knowledge tokens}, and the corresponding locations in the generated sequence as \emph{knowledge-token positions}. In contrast, the remaining linguistic tokens mainly reflect general language ability and should ideally be preserved. Therefore, precise unlearning requires identifying which parts of the generated sequence actually encode the knowledge to be removed.

To investigate this question, we analyze how unlearning changes the model's predictive distributions across different token positions. As shown in Fig.~\ref{fig:knowledge-analysis}, the distribution shift after unlearning highly concentrates on knowledge tokens, while normal tokens exhibit substantially smaller changes. The observation indicates that unlearning should focus on these knowledge tokens rather than optimizing uniformly over the entire generated sequence.

\begin{figure}[t]
    \centering
    \includegraphics[width=0.35\linewidth]{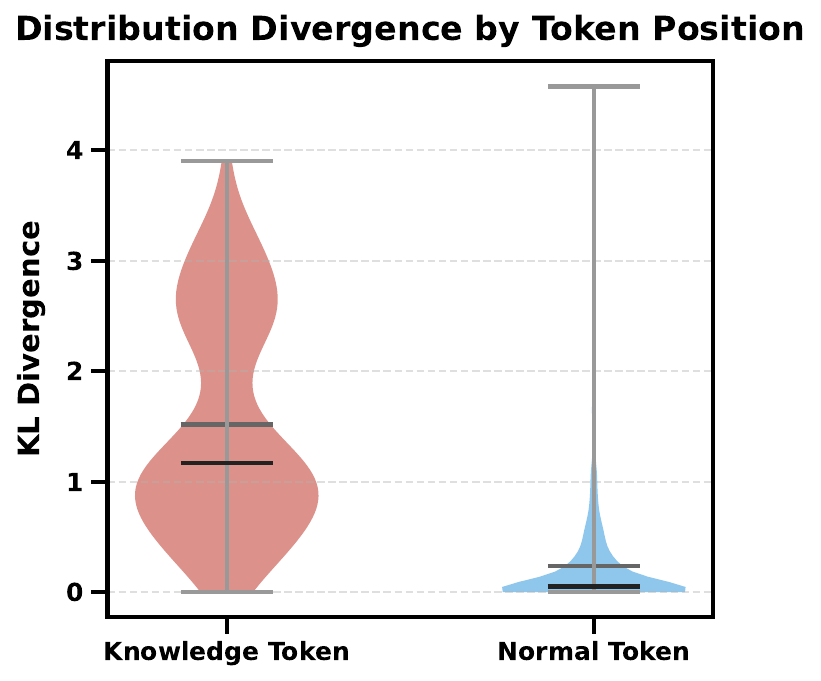}
    \includegraphics[width=0.35\linewidth]{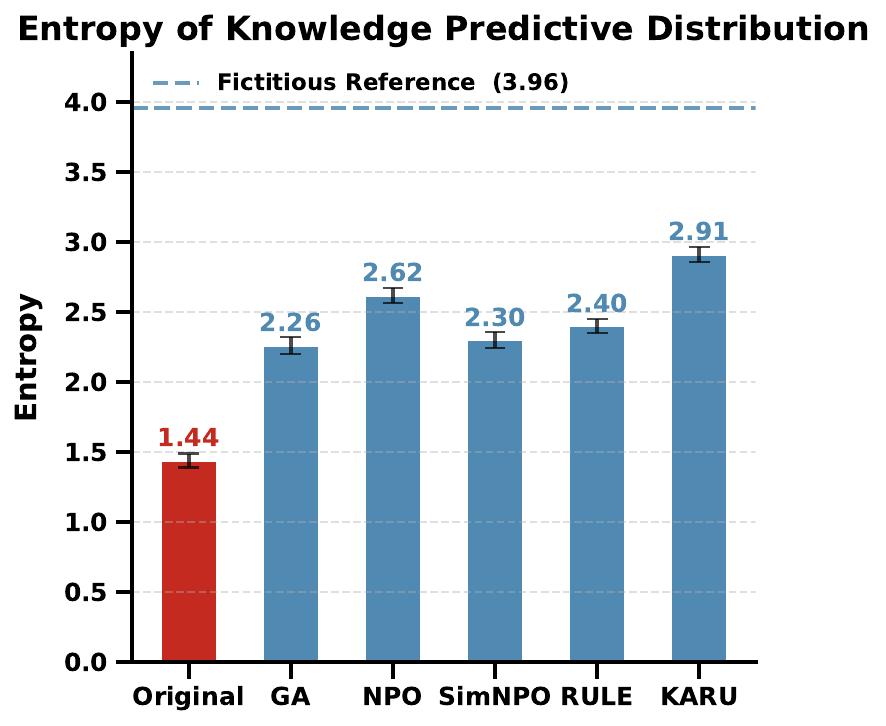}
    \caption{
    Empirical analysis of predictive distribution changes during unlearning.
    \textbf{Left}: Distribution shifts concentrate on knowledge tokens.
    \textbf{Right}: Comparison of entropy at knowledge-token positions across different unlearning models and a fictitious unseen-knowledge reference.
    }
    \label{fig:knowledge-analysis}
\end{figure}

\paragraph{Knowledge Association.}
Motivated by the observation above, we characterize the forgetting target through the model's predictive distribution at knowledge-token positions. We term this conditional predictive distribution, which associates a query context with its target knowledge, the knowledge association (KA).

\begin{definition}[Knowledge Association]
\label{def:knowledge_assoc}
For a query $x$ with model response $y$, let $t_0$ be the first knowledge-token position of the target knowledge in $y$ and $z_{t_0}=(x,y_{<t_0})$ the corresponding prediction context. We represent the knowledge association by:
\begin{equation}
\K_\theta(z_{t_0}) \triangleq \pi_\theta(\cdot \mid z_{t_0}).
\label{eq:knowledge_assoc}
\end{equation}
\end{definition}

Treating KA as the fundamental forgetting target, we reformulate LLM unlearning as suppressing knowledge associations.
Let $\K_{\text{ref}}$ denote the knowledge association of a reference model, ideally the retrained model $\pi_{\text{ref}}$. We then minimize the discrepancy between $\K_\theta(z_t)$ and $\K_{\text{ref}}(z_t)$ on forget samples while preserving predictive capability on retain samples:
\begin{equation}
\begin{aligned}
\min_\theta\;\; & \E_{x\sim\D_f, y\sim\pi_\theta(\cdot\vert x)}\left[d\bigl(\K_\theta(z_{t_0}),\,\K_{\text{ref}}(z_{t_0})\bigr)\right] \\
\text{s.t.}\;\; & \E_{(x',y')\sim\D_r}\left[\phi_{y'}(\K_\theta(z_{t_0'}))\right] \geq \tau,
\end{aligned}
\label{eq:unified_obj}
\end{equation}
where $d(\cdot,\cdot)$ is a discrepancy measure, and $\phi_{y'}(\K_\theta(z_{t_0'}))=\pi_\theta (y'_{t_0'}\mid x',y'_{<t_0'})$ extracts the probability assigned to the knowledge token.
Eq.~\eqref{eq:unified_obj} defines the ideal objective but is intractable in practice due to the unavailable reference association $\K_{\mathrm{ref}}$.
We therefore derive a practical RL framework that approximately optimizes the objective.

\section{Knowledge Association Reinforcement Unlearning}

\begin{figure*}[t]
  \centering
  \includegraphics[width=0.9\linewidth]{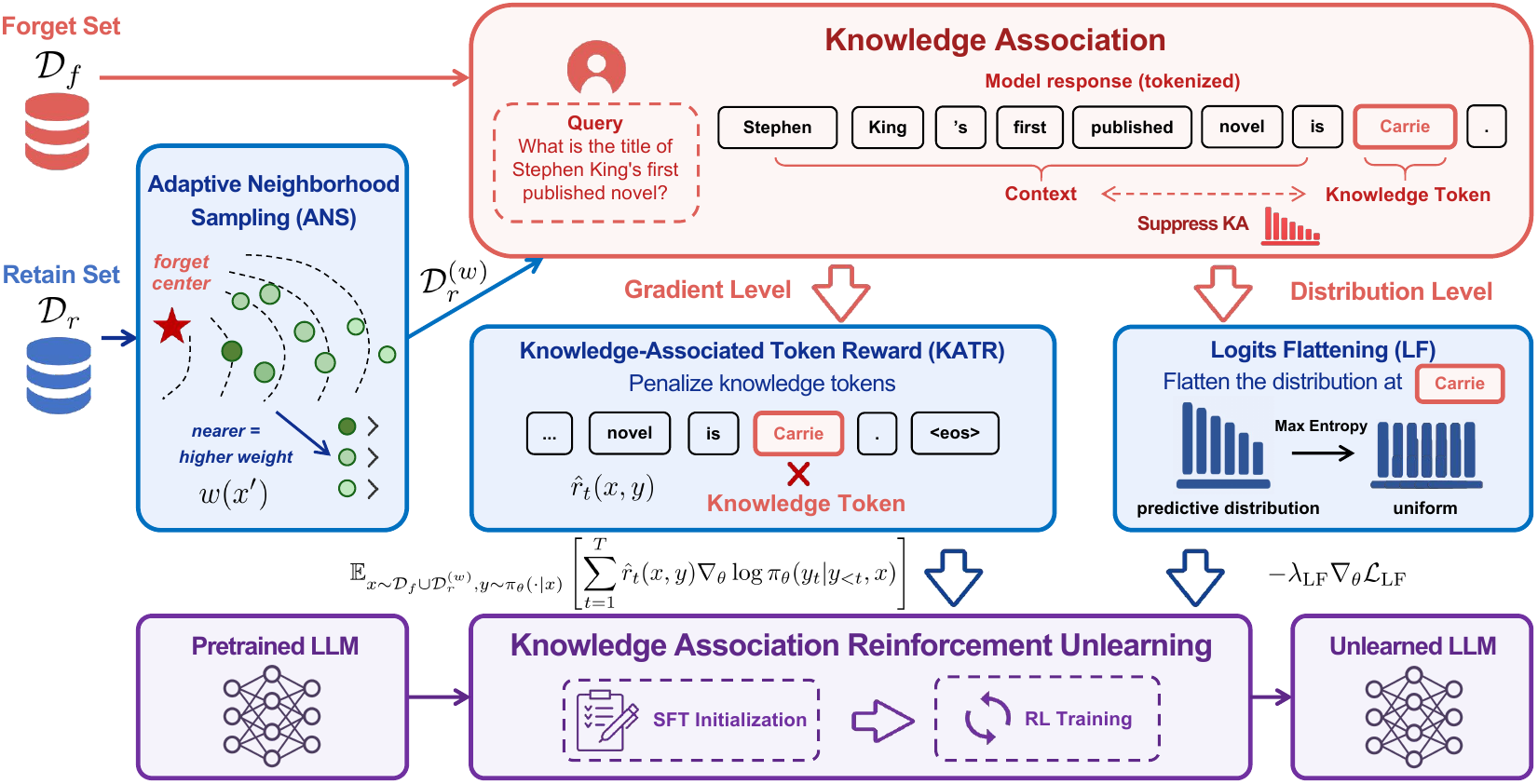}
  \caption{\textbf{Overview of KARU pipeline.} Starting from KA as the forgetting target, KARU derives complementary optimization mechanisms for forgetting and retention: (i) KATR provides fine-grained forgetting supervision, (ii) LF promotes distribution-level knowledge removal, and (iii) ANS strengthens retention on vulnerable retain samples.}
  \label{fig:pipeline}
\end{figure*}

To optimize the knowledge association unlearning objective within the RL framework, we propose KARU, which consists of three complementary components. For forgetting, gradient analysis derives a token-level reward for fine-grained credit assignment, while logits flattening enforces distribution-level suppression of forgotten knowledge. For retention, adaptive proximity-weighted sampling refines the uniform retain supervision by prioritizing vulnerable neighbors and mitigating collateral forgetting.

\subsection{Knowledge Associated Token Reward}
\label{subsec:token-level}

\subsubsection{Limitations of Sequence-Level Rewards}
Existing RL-based unlearning methods~\citep{zhang2025rulereinforcementunlearningachieves} assign a single scalar reward $r^{\mathrm{seq}}(x,y)$ to the entire response. Specifically, given an input query $x\in \D$, the RL objective is defined as
\begin{equation}
    J(\theta)=\E_{y \sim \pi_\theta(\cdot \mid x)}[r^{\mathrm{seq}}(x,y)],
\label{eq:rl_objective}
\end{equation}
whose policy gradient is
\begin{equation}
\nabla_\theta J(\theta) = \E_{y \sim \pi_\theta (\cdot \mid x)}\Bigl[r^{\mathrm{seq}}(x,y) \sum_{t=1}^{T} \nabla_\theta \log\pi_\theta(y_t\mid x,y_{<t})\Bigr].
\label{eq:pg_surrogate}
\end{equation}
Therefore, the same sequence-level reward is propagated to all token positions, without distinguishing their individual contributions. Table~\ref{tab:knowledge_ratio} in Appendix~\ref{app:knowledge-analysis} empirically shows that knowledge tokens usually account for only a small fraction of the generated responses, suggesting that the forgetting signal may be diluted across irrelevant tokens and motivating finer-grained credit assignment.
To address this, we next show that token-level rewards naturally arise from the gradient structure of the objective in Eq.~\eqref{eq:unified_obj}.

\subsubsection{Gradient Derivation of Token-Level Rewards}
For a forget sample $x\in \D_f$ and any token position $t$, let $\pi_\theta^t(v)=\pi_\theta(v\mid z_t)$ and $\pi_{\mathrm{ref}}^t(v)=\pi_{\mathrm{ref}}(v\mid z_t)$.
Instantiating the discrepancy measure $d(\cdot,\cdot)$ as the KL divergence, the local contribution of position $t$ to the forget objective in Eq.~\eqref{eq:unified_obj} is:
\begin{equation}
\mathcal{L}_f^t(\theta) \triangleq D_{\text{KL}}\bigl(\pi_\theta^t \;\|\; \pi_{\text{ref}}^t\bigr) = \sum_{v \in \mathcal{V}} \pi_\theta^t(v) \log \frac{\pi_\theta^t(v)}{\pi_{\text{ref}}^t(v)}.
\label{eq:Lf_kl}
\end{equation}
Taking the gradient gives:
\begin{equation}
\begin{aligned}
\nabla_\theta \mathcal{L}_f^t(\theta)\, = \mathbb{E}_{v \sim \pi_\theta^t} \left[ \log \frac{\pi_\theta^t(v)}{\pi_{\text{ref}}^t(v)} \cdot \nabla_\theta \log \pi_\theta^t(v) \right].
\end{aligned}
\label{eq:Lf_grad}
\end{equation}

Notably, Eq.~\eqref{eq:Lf_grad} shares the same structure with the token-level policy gradient $\mathbb{E}_{v \sim \pi_\theta^t} [r_{t,v} \nabla_\theta \log \pi_\theta^t(v)]$, where $r_{t,v}$ denotes the token-level reward assigned to token $v$ at position $t$. This suggests that the logarithmic coefficient can be naturally interpreted as a token-level reward signal. This motivates the following ideal token-level reward:
\begin{equation}
r_{t,v}^* = -\log{\frac{\pi_\theta(v\mid z_t)}{\pi_{\text{ref}}(v\mid z_t)}},
\label{eq:ideal_reward}
\end{equation}

\subsubsection{Reward Design}
Eq.~\eqref{eq:ideal_reward} shows that the ideal token-level reward is determined by the probability shift relative to the reference distribution.
We propose Knowledge Associated Token Reward (KATR) to approximate it. Guided by our KA formulation and the observed concentration of knowledge token distribution shifts, we focus reward assignment on these positions. Since the ideal reference model should lack of explicit memorization of the forgotten knowledge, it assigns weaker probability preference to these knowledge tokens compared with the original model. Consequently, $\pi_{\text{ref}}<\pi_\theta$ naturally induces negative reward ($-1$).

Additionally, prior works~\citep{jin2024rwkubenchmarkingrealworldknowledge,maini2024tofu,zhang2025rulereinforcementunlearningachieves} commonly encourage refusal as the desired behavior for forget queries. This effectively implies an additional high-probability mass on refusal tokens in the reference distribution $\pi_{\text{ref}}>\pi_\theta$, inducing positive reward ($+1$) for those refusal-related tokens.

For retain samples, KATR follows the same principle in the opposite direction by rewarding knowledge tokens and penalizing refusal tokens to preserve normal responses~\citep{zhang2025rulereinforcementunlearningachieves}.
The binary reward thus acts as a sign-level approximation of Eq.~\eqref{eq:ideal_reward}: it preserves the direction of the required probability shift while avoiding magnitude estimation on the unavailable reference model.
Appendix~\ref{app:token-reward-impl} provides the full specification and verifies that KATR concentrates supervision on knowledge tokens more effectively than sequence-level rewards.

\subsection{Logits Flattening}
\label{subsec:logits-flattening}

KATR approximates the ideal token-level reward from the gradient perspective, aligning the policy gradient with the objective in Eq.~\eqref{eq:Lf_kl} and providing a first-order optimization signal.
Complementarily, it does not explicitly constrain the predictive distribution, allowing probability mass to shift among related alternatives while remaining concentrated~\citep{jia2024soulunlockingpowersecondorder,zhai2026maximizing}.
To complement this, we further propose Logits Flattening~(LF) to approximately optimize Eq.~\eqref{eq:Lf_kl} from the distribution level.

\subsubsection{Distributional Forgetting Behavior}
Ideally, Eq.~\eqref{eq:Lf_kl} requires the reference distribution $\K_{\text{ref}}$, which is unavailable in practice.
We therefore approximate it using a fictitious knowledge set of synthetic query-response pairs.
These samples represent knowledge that the model has never acquired, serving as a practical proxy for the predictive behavior after
unlearning.
As shown in Fig.~\ref{fig:knowledge-analysis} (right), the original model exhibits substantially higher entropy on fictitious knowledge than on memorized knowledge, indicating that the absence of knowledge naturally leads to less concentrated predictions.
This observation suggests that effective knowledge unlearning should move knowledge-token predictions toward the higher-entropy behavior exhibited by fictitious knowledge.
This trend is also consistent with existing unlearning methods, which typically increase the entropy of forgotten knowledge predictions relative to the original model.

\subsubsection{Entropy as a Distributional Surrogate}

Based on this observation, we introduce the following surrogate principle:

\begin{principle}[Entropy Surrogate]
For knowledge tokens, the predictive distribution of an effectively forgotten model is characterized by higher entropy than that of a memorized model. Therefore, entropy provides a reference-free surrogate for approximating the ideal forgetting behavior.
\end{principle}

Following this principle, we propose Logits Flattening to approximate distribution-level forgetting by maximizing the entropy of knowledge-token predictions:
\begin{equation}
\loss_{\text{LF}} = -\E_{x\sim\D_f, y \sim \pi_\theta(\cdot \mid x)}\left[
\sum_{t=1}^{T}
\mathbb{I}(y_t\in\T_{\text{know}})
H\left(\pi_\theta(\cdot\mid x,y_{<t})\right)
\right],
\label{eq:lf_loss}
\end{equation}
where $H$ denotes the entropy of the predictive distribution, and $\mathbb{I}(y_t\in\T_{\text{know}})$ is an indicator function that selects only knowledge tokens for entropy maximization. Here, $\T_{\text{know}}$ denotes the pre-extracted set of knowledge tokens identified from $\D_f$, with the extraction procedure detailed in Appendix~\ref{app:token-reward-impl}.

\begin{wrapfigure}{r}{0.45\linewidth}
    \vspace{-18pt}
    \centering
    \includegraphics[width=\linewidth]{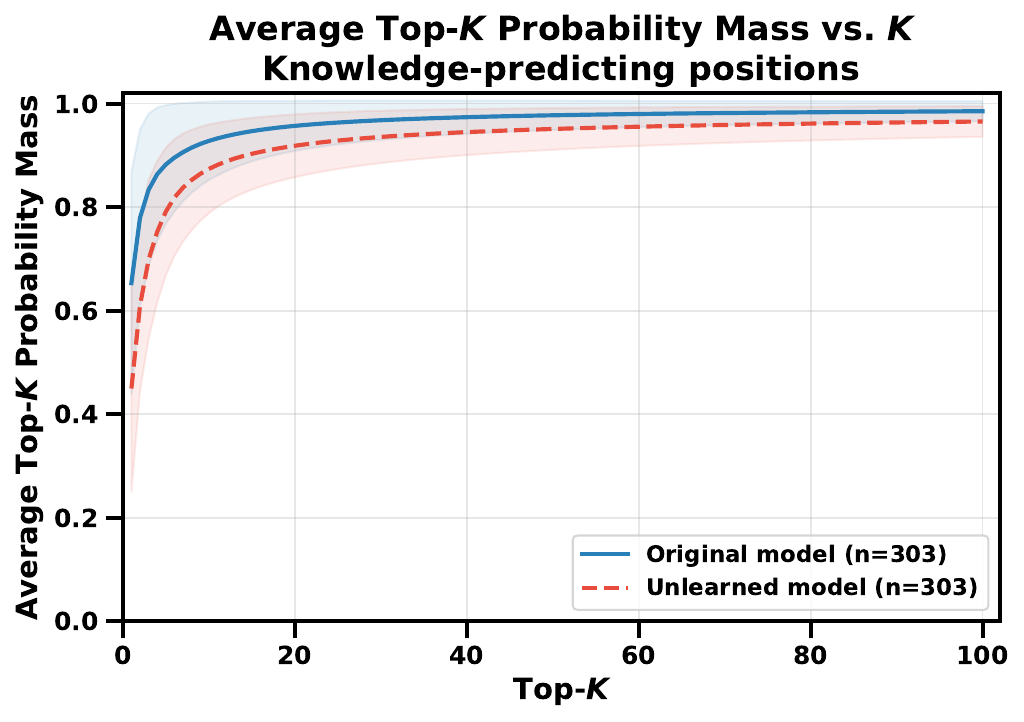}
    \vspace{-20pt}
    \caption{\textbf{Average Top-$k$ probability mass at knowledge-token positions.} The cumulative probability mass captured by Top-$k$ tokens increased rapidly with $k$ for both original and unlearned models.}
    \label{fig:tail_mass_cdf}
    \vspace{-15pt}
\end{wrapfigure}
However, computing the entropy over the full vocabulary is computationally expensive. In practice, we approximate the entropy in Eq.~\eqref{eq:lf_loss} using only the Top-$k$ probability mass. Under a mild tail-concentration assumption~\ref{ass:topk}, the approximation error remains controlled compared with the typical entropy range, while substantially reducing both computational and memory overhead.
As shown in Fig.~\ref{fig:tail_mass_cdf}, both the original and unlearned models concentrate most probability mass within a small set of Top-$k$ tokens at knowledge-token positions, so a moderate $k$ already captures nearly the entire predictive mass and keeps the approximation tight.

Maximizing entropy promotes uncertainty in predictions at relevant knowledge-token positions, approximating the behavior of a model that has not acquired the target knowledge~\citep{entesari2026constrained,zhai2026maximizing}. Comparison of distributional surrogates in Appendix~\ref{sec:analysis-of-lf} further shows that entropy most consistently distinguishes memorized from fictitious knowledge compared with other distribution-level metrics.

\subsection{Adaptive Neighborhood Sampling}
\label{subsec:adaptive-neighbor}

\subsubsection{Locality of Collateral Degradation}
\begin{wrapfigure}{r}{0.45\linewidth}
    \centering
    \vspace{-18pt}
    \includegraphics[width=\linewidth]{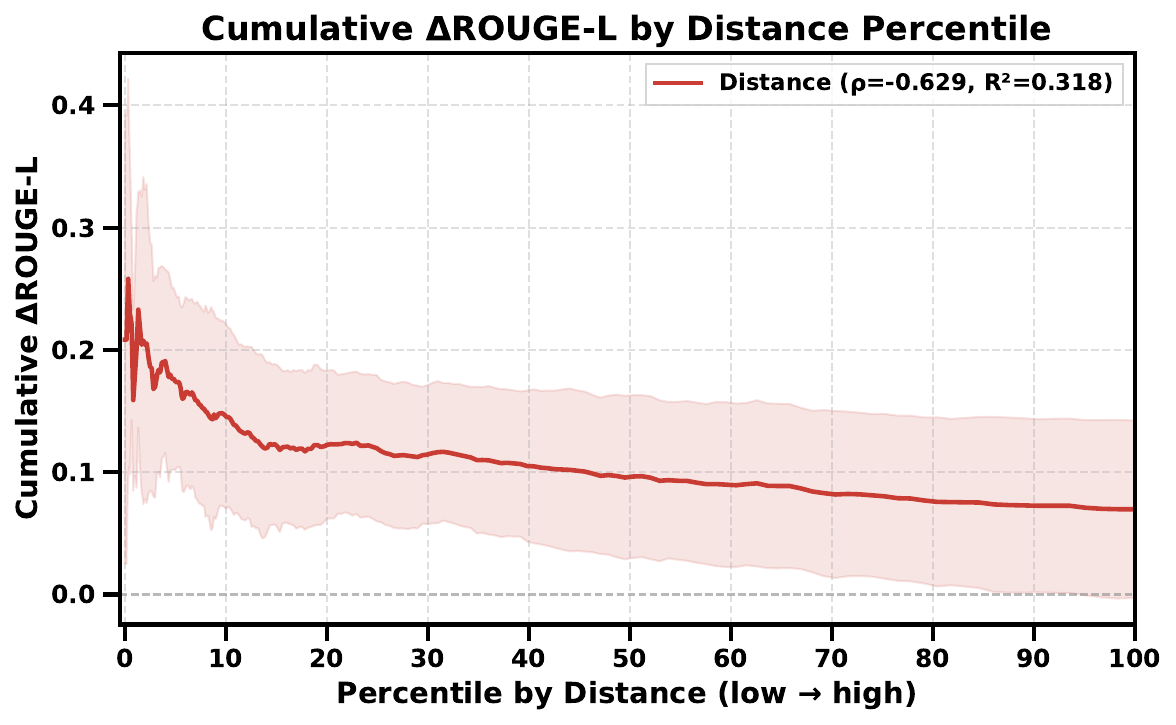}
    \vspace{-5pt}
    \caption{\textbf{Collateral degradation versus representation-space proximity.} Retain samples closer to forget set exhibit larger ROUGE-L degradation after unlearning.}
    \vspace{-10pt}
    \label{fig:proximity-effect}
\end{wrapfigure}
The retain constraint in Eq.~\eqref{eq:unified_obj} aims to preserve retained knowledge.
However, prior works~\citep{dolgova2026forgetting,ha2025unlearning,cheng2026machine} show that forgetting inevitably degrades some retain samples, a phenomenon referred to as collateral degradation, and that such degradation is concentrated on a subset of retain samples.
Theoretically, \citet{dolgova2026forgetting} show that collateral forgetting is related to the similarity between retain samples and the forget subspace.

Empirically, we measure the representation-space distance between retain samples and the forget set and examine their corresponding performance degradation after unlearning. Performance degradation is measured by the decrease in ROUGE-L similarity~\citep{lin-2004-rouge} between the unlearned and original model outputs.
Consistent with the theoretical prediction, Fig.~\ref{fig:proximity-effect} shows that collateral degradation exhibits a clear proximity effect: retain samples closer to the forget set in representation space suffer larger performance degradation after unlearning.
Accordingly, we propose Adaptive Neighborhood Sampling (ANS), which strengthens retention supervision by assigning larger sampling weights to nearby retain samples.

\subsubsection{Proximity Reweighting}
Specifically, let $rank(x')$ denote the rank of a retain sample $x'$ obtained by sorting all retain samples according to ascending $\operatorname{dist}(x',\D_f)$, where smaller values indicate closer samples.
For a percentile fraction $\alpha \in (0,1]$, only the closest $\lceil \alpha |\D_r| \rceil$ retain samples are assigned larger sampling weights, with the sampling weight defined as:
\begin{equation}
w(x') = 1 + \beta \cdot \max\Bigl(0,\; 1 - \frac{rank(x')}{\lceil \alpha |\D_r| \rceil}\Bigr),
\label{eq:weighted_dist}
\end{equation}
where $\beta>0$ controls the maximum reweighting strength.
The weighting decreases linearly with the proximity rank, while rank-based weighting avoids sensitivity to the absolute scale of the distance metric.
Moreover, since the neighborhood size and weights depend only on rank orderings rather than absolute distances, the scheme is nonparametric with respect to the density of retain samples in representation space and adapts to arbitrary sample distributions.
During training, retain samples are drawn with probability proportional to
$w(x')$, resulting in an effective reweighted retain distribution
$\D_r^{(w)}$. The concrete distance instantiation is detailed in Appendix~\ref{app:analysis-of-ans}.
The proximity computation incurs only a one-time offline cost, requiring a single forward pass over $\D_r$, and the measured per-step overhead is negligible as reported in Appendix~\ref{app:complexity}.

Although ANS is constructed offline before RL training, representation drift during optimization may in principle distort the proximity estimates.
We therefore track the distances of retain samples throughout training on a representative RWKU subset.
The proximity structure remains nearly unchanged: 95\% of retain samples exhibit less than 2.8\% relative distance variation, with a mean change of 1.1\%, and the induced ranking achieves a Spearman correlation of 0.998 across training steps.
These results indicate that the representation-space proximity used by ANS is sufficiently stable, supporting offline computation without repeated distance updates during RL optimization.

\subsection{Overall Implementation}
\label{subsec:complete-framework}

The overall optimization update of KARU is summarized below:
\begin{equation}
\nabla_\theta J_{\text{KARU}} = \E_{x\sim\D_f\cup\D_r^{(w)}, y\sim\pi_\theta(\cdot|x)} \left[ \sum_{t=1}^{T} \hat{r}_t \nabla \log\pi_\theta(y_t\mid x,y_{<t}) \right] -\lambda_{\mathrm{LF}} \nabla_\theta\mathcal{L}_{\mathrm{LF}},
\label{eq:total_objective}
\end{equation}
where $\hat{r}_t$ is the token-level reward produced by KATR, $\D_r^{(w)}$ is the retain distribution reweighted by ANS, and $\lambda_{\mathrm{LF}}$ controls the strength of the logits flattening regularization.

The resulting framework integrates fine-grained optimization, distribution-level forgetting, and proximity-aware retain reweighting.
Moreover, KARU is agnostic to the underlying policy optimization algorithm and can be seamlessly integrated into existing RL-based unlearning methods with negligible computational overhead, as discussed in Appendix~\ref{app:complexity}.
The overall training procedure is summarized in Algorithm~\ref{alg:rl-unlearning}.

\begin{algorithm}[t]
\caption{Knowledge Association Reinforcement Unlearning (KARU)}
\label{alg:rl-unlearning}
\begin{algorithmic}[1]
\REQUIRE Original model $\pi_{\theta_0}$, forget set $\D_f$, retain set $\D_r$, hyperparameters $\lambda_{\text{LF}}, k, \alpha_{\mathrm{WS}}, \beta_{\mathrm{WS}}$ 
\ENSURE Unlearned model $\pi_{\text{unlearn}}$

\STATE Rejection-Initialize $\pi_\theta$ from $\pi_{\theta_0}$ using $\D_f$ via SFT
\STATE Extract knowledge token sets $\T_{\text{know}}$ from $\D_f$ via NER
\STATE Compute representation-space distances $d(x', \D_f)$ for all $x' \in \D_r$
\STATE Compute proximity weights $\{w(x')\}_{x' \in \D_r}$ (Eq.~\eqref{eq:weighted_dist}) and construct the weighted retain set $\D_r^{(w)}$ by sampling from $\D_r$ with probability proportional to $w(x')$

\FOR{epoch = 1 to $N_{\text{epochs}}$}
    \FOR{batch $(x, y) \in \D_f \cup \D_r^{(w)}$}
        \IF{$(x, y) \in \D_f$}
            \STATE Compute KATR $\{r_t\}_{t=1}^T$ for forget tokens (Eq.~\eqref{eq:token_reward})
            \STATE Compute Logits Flattening regularization $\loss_{\text{LF}}$ (Eq.~\eqref{eq:lf_loss})
        \ELSE
            \STATE Compute KATR $\{r_t\}_{t=1}^T$ for retain tokens (Eq.~\eqref{eq:token_reward})
            \STATE Set $\loss_{\text{LF}} = 0$
        \ENDIF
        \STATE Compute KARU objective $J_{\mathrm{KARU}}$ (Eq.~\eqref{eq:total_objective})
        \STATE Update model parameters $\theta \leftarrow \theta + \eta\nabla_\theta J_{\mathrm{KARU}}$
    \ENDFOR
\ENDFOR

\RETURN $\pi_{\text{unlearn}} = \pi_\theta$
\end{algorithmic}
\end{algorithm}

\section{Experiments}
\label{sec:experiments}

\paragraph{Setups.} We evaluate the KARU framework on two representative benchmarks: RWKU~\citep{jin2024rwkubenchmarkingrealworldknowledge} and MUSE~\citep{shi2024muse}. Following RULE~\citep{zhang2025rulereinforcementunlearningachieves}, we use Llama3-8B-Instruct~\citep{grattafiori2024llama3herdmodels} as the primary base model on RWKU, with Llama3.2-3B-Instruct and Qwen2.5-7B-Instruct~\citep{qwen2025qwen25technicalreport} for cross-model validation; MUSE experiments use ICLM-7B~\citep{shi2024incontextpretraining}. Our RL-based unlearning is initialized from the rejection-tuned model of RULE and uses PPO and GRPO as the optimization algorithms. Further implementation details are provided in Appendix~\ref{app:training-details}.

\paragraph{Baselines.} We compare against representative baseline methods spanning SFT-based and RL-based paradigms:
\begin{itemize}
    \item \textbf{GA}~\citep{maini2024tofu}: Gradient Ascent on the forget set loss;
    \item \textbf{NPO}~\citep{zhang2024npo}: Negative Preference Optimization, reframing unlearning as minimizing preference toward undesired outputs;
    \item \textbf{SimNPO}~\citep{meng2024simposimplepreferenceoptimization}: An improved variant of NPO that removes reference model dependency;
    \item \textbf{PALU}~\citep{zhai2026maximizing}: A localized unlearning method that maximizes entropy by flattening the top-$K$ logits at the first $N$ tokens of each sensitive span;
    \item \textbf{RT}~(Rejection Tuning): Supervised fine-tuning on refusal responses for forget queries;
    \item \textbf{RULE}~\citep{zhang2025rulereinforcementunlearningachieves}: The state-of-the-art RL-based unlearning method with response-level reward.
\end{itemize}
For GA, NPO, and SimNPO, we additionally evaluate variants augmented with Gradient Difference Regularization (+GDR) and KL Regularization (+KLR) on the retain set~\citep{shi2024muse}.

\paragraph{Evaluation Metrics.}
Following RWKU~\citep{jin2024rwkubenchmarkingrealworldknowledge}, we evaluate unlearning from three perspectives: \textbf{forget efficacy}, \textbf{retain locality}, and \textbf{general capability}. Forget efficacy is assessed using three probes: fill-in-the-blank (FB) for knowledge memorization, question answering (QA) for knowledge manipulation, and adversarial attacks (AA) that attempt to elicit forgotten knowledge. Retain locality is evaluated using neighborhood FB and QA probes. All forget and retain probes are scored by ROUGE-L recall~\citep{lin-2004-rouge}; the aggregate forget score averages FB, QA, and AA (lower is better), while the aggregate retain score averages the neighborhood FB and QA (higher is better). General capability is evaluated along three axes: \textbf{General Knowledge (Gen)}, measured by 5-shot accuracy on MMLU~\citep{hendrycks2021measuringmassivemultitasklanguage}; \textbf{Reasoning (Rea)}, exact-match accuracy on BBH~\citep{suzgun2022challengingbigbenchtaskschainofthought}; and \textbf{Fluency (Flu)}, the weighted bi- and tri-gram entropy of AlpacaEval~\citep{li2023alpacaeval} outputs. Formal definitions of all metrics follow RWKU~\citep{jin2024rwkubenchmarkingrealworldknowledge}.

\begin{figure}[t]
  \centering
  \includegraphics[width=0.7\linewidth]{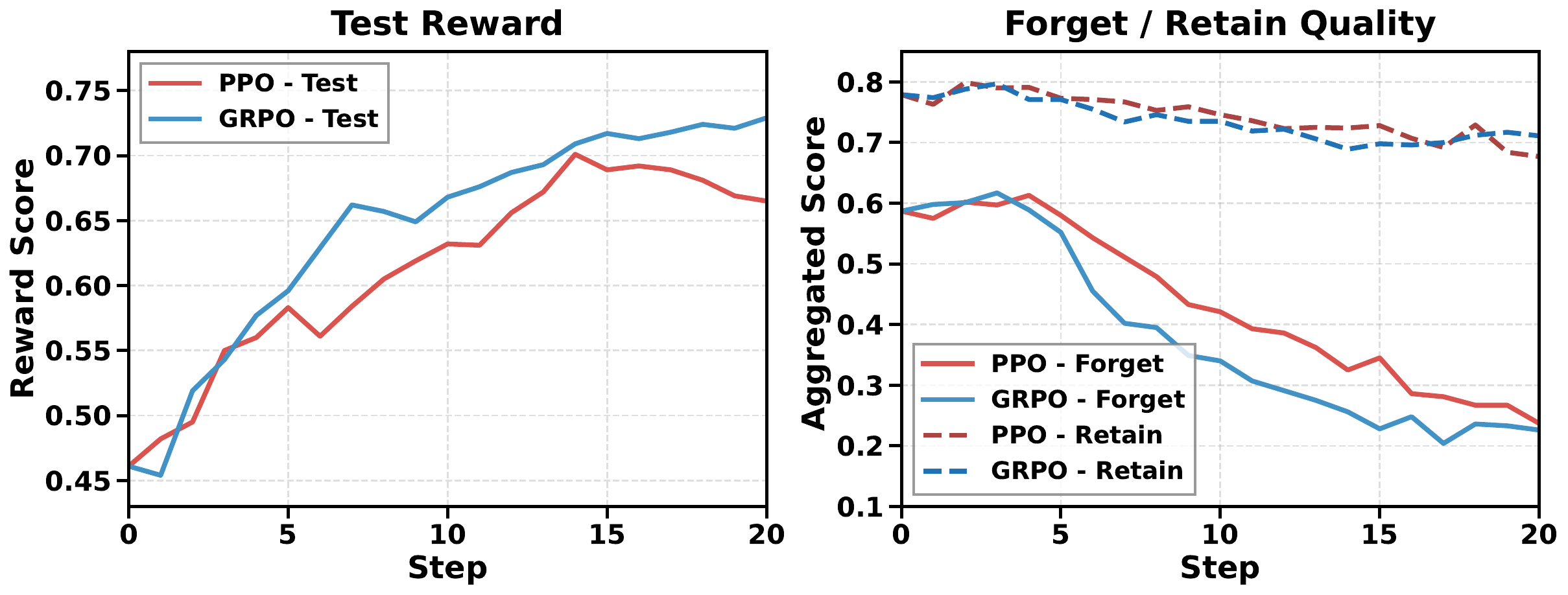}
  \caption{\textbf{Left:} Test reward curves under PPO and GRPO. \textbf{Right:} Forget and retain metric trajectories during training.}
  \vspace{-5pt}
  \label{fig:training-curves}
\end{figure}

\subsection{Main Results}

Table~\ref{tab:main-results} presents the comprehensive experimental results on the RWKU benchmark using Llama3-8B-Instruct as the backbone model. Fig.~\ref{fig:training-curves} provides a detailed analysis of KARU's training dynamics under different RL algorithms, showing the testing reward trajectories along with the corresponding changes in the overall forget and retain performance. Key findings are summarized below.

\begin{wraptable}{r}{0.6\linewidth}
\centering
\scriptsize
\setlength{\tabcolsep}{2pt}
\renewcommand{\arraystretch}{1.1}
\vspace{-10pt}
\caption{Main results on the RWKU benchmark. Best in \textbf{bold}, second best \underline{underlined}. FB: Fill-in-the-Blank; QA: Question-Answer; AA: Adversarial Attack; Gen: General Knowledge Ability; Rea: Reasoning Ability; Flu: Fluency. $\downarrow$/$\uparrow$: lower/higher is better.}
\label{tab:main-results}
\adjustbox{width=\linewidth}{
\begin{tabular}{l|cccc|ccc|ccc}
\toprule
\multirow{2}{*}{\textbf{Method}} & \multicolumn{4}{c}{Forget ($\downarrow$)} & \multicolumn{3}{c}{Retain ($\uparrow$)} & \multicolumn{3}{c}{General ($\uparrow$)} \\
\cmidrule(lr){2-5}\cmidrule(lr){6-8}\cmidrule(lr){9-11}
& FB & QA & AA & All & FB & QA & All & Gen & Rea & Flu \\
\midrule
Original & 85.6 & 70.3 & 74.7 & 76.9 & 93.1 & 82.0 & 87.6 & 63.3 & 40.5 & 706.6 \\
\midrule
GA         & 72.0 & 64.6 & 68.5 & 68.4 & \underline{85.0} & \underline{74.7} & \underline{79.8} & \underline{63.5} & 38.8 & 707.7 \\
GA$_{\text{GDR}}$ & 72.6 & 64.0 & 69.7 & 68.8 & \textbf{86.2} & \textbf{76.5} & \textbf{81.4} & \textbf{63.6} & 40.5 & 707.1 \\
GA$_{\text{KLR}}$ & 70.7 & 57.5 & 69.9 & 66.1 & 80.5 & 70.5 & 75.5 & 63.4 & 40.1 & 706.5 \\
\midrule
NPO        & 38.7 & 22.9 & \underline{33.0} & 31.5 & 76.7 & 67.5 & 72.1 & 62.8 & 37.9 & 693.1 \\
NPO$_{\text{GDR}}$ & 39.7 & 26.2 & 35.6 & 33.9 & 76.9 & 69.9 & 73.4 & 63.2 & 38.4 & 703.6 \\
NPO$_{\text{KLR}}$ & 39.8 & 27.4 & 35.6 & 34.3 & 77.4 & 70.1 & 73.7 & 63.3 & 39.9 & 699.6 \\
\midrule
SimNPO & 45.7 & 38.3 & 50.2 & 44.7 & 82.2 & 71.8 & 77.0 & 62.7 & 38.5 & 700.2 \\
SimNPO$_{\text{GDR}}$ & 53.7 & 47.7 & 54.6 & 52.0 & 83.6 & 73.0 & 78.3 & 63.4 & 39.1 & \underline{708.5} \\
SimNPO$_{\text{KLR}}$ & 46.1 & 36.7 & 52.2 & 45.0 & 82.7 & 73.3 & 78.0 & 63.3 & 39.1 & 705.9 \\
\midrule
PALU & 45.8 & 32.7 & 35.6 & 38.0 & 79.4 & 73.8 & 76.6 & 63.0 & 35.6 & 703.6 \\
\midrule
RT & 75.2 & 47.1 & 51.6 & 58.7 & 85.9 & 70.0 & 77.9 & \textbf{63.6} & 38.5 & 680.7 \\
\midrule
RULE$_{\text{PPO}}$  & 34.5 & 22.2 & 34.2 & 30.3 & 70.4 & 52.4 & 61.4 & 63.3 & 38.9 & 641.5 \\
RULE$_{\text{GRPO}}$ & 32.7 & 23.7 & 41.1 & 32.5 & 77.0 & 66.9 & 71.9 & 63.3 & \underline{41.0} & 702.0 \\
\midrule
KARU$_{\text{PPO}}$  & \underline{25.1} & \underline{13.5} & \textbf{32.5} & \underline{23.7} & 71.9 & 63.6 & 67.7 & \underline{63.5} & \underline{41.0} & \textbf{717.7} \\
KARU$_{\text{GRPO}}$ & \textbf{22.5} & \textbf{11.2} & 34.0 & \textbf{22.6} & 76.5 & 65.7 & 71.1 & 63.0 & \textbf{42.1} & 707.7 \\
\bottomrule
\end{tabular}
}
\vspace{-20pt}
\end{wraptable}
\paragraph{KARU improves forgetting substantially.}
KARU consistently achieves the strongest forgetting performance among all evaluated methods.
Compared with the SOTA RL-based method RULE, KARU reduces the aggregate forget score by 6.6 and 9.9 points under PPO and GRPO, respectively, corresponding to relative reductions of 21.8\% and 30.5\%.
The improvements are especially notable on the FB and QA probes, which require recovering the target knowledge from the given context and thus directly test the association that KA characterizes as the forgetting target. This validates that KARU effectively reduces the predictive association with forgotten knowledge.

\paragraph{KARU preserves retention and general capabilities.}
Beyond significantly improved forgetting performance, KARU maintains comparable retention performance across different RL algorithms.
Compared with RULE under the same algorithm, KARU$_{\text{GRPO}}$ decreases the overall retain score by only 1.1\%, while KARU$_{\text{PPO}}$ even achieves improved retention, demonstrating that stronger forgetting does not compromise retained knowledge.
We note that a decline in absolute retention relative to the original model is inherent to unlearning, where forgetting and retention are in tension, and our goal is not to maximize retention but to attain the best forget--retain trade-off while maintaining sufficient retention, as formulated by the constraint in Eq.~\eqref{eq:unified_obj}.
Furthermore, KARU achieves competitive performance on general capability evaluation: KARU$_{\text{GRPO}}$ achieves the highest Reasoning score, while KARU$_{\text{PPO}}$ obtains the best Fluency among all evaluated methods, indicating that logits flattening suppresses forgotten KAs without degenerating outputs into ungrammatical text; the Gen scores of all methods remain at a comparable level.
These results demonstrate that KARU achieves effective knowledge removal without sacrificing the overall utility.

\begin{figure}[t]
  \centering
  \includegraphics[width=0.45\linewidth]{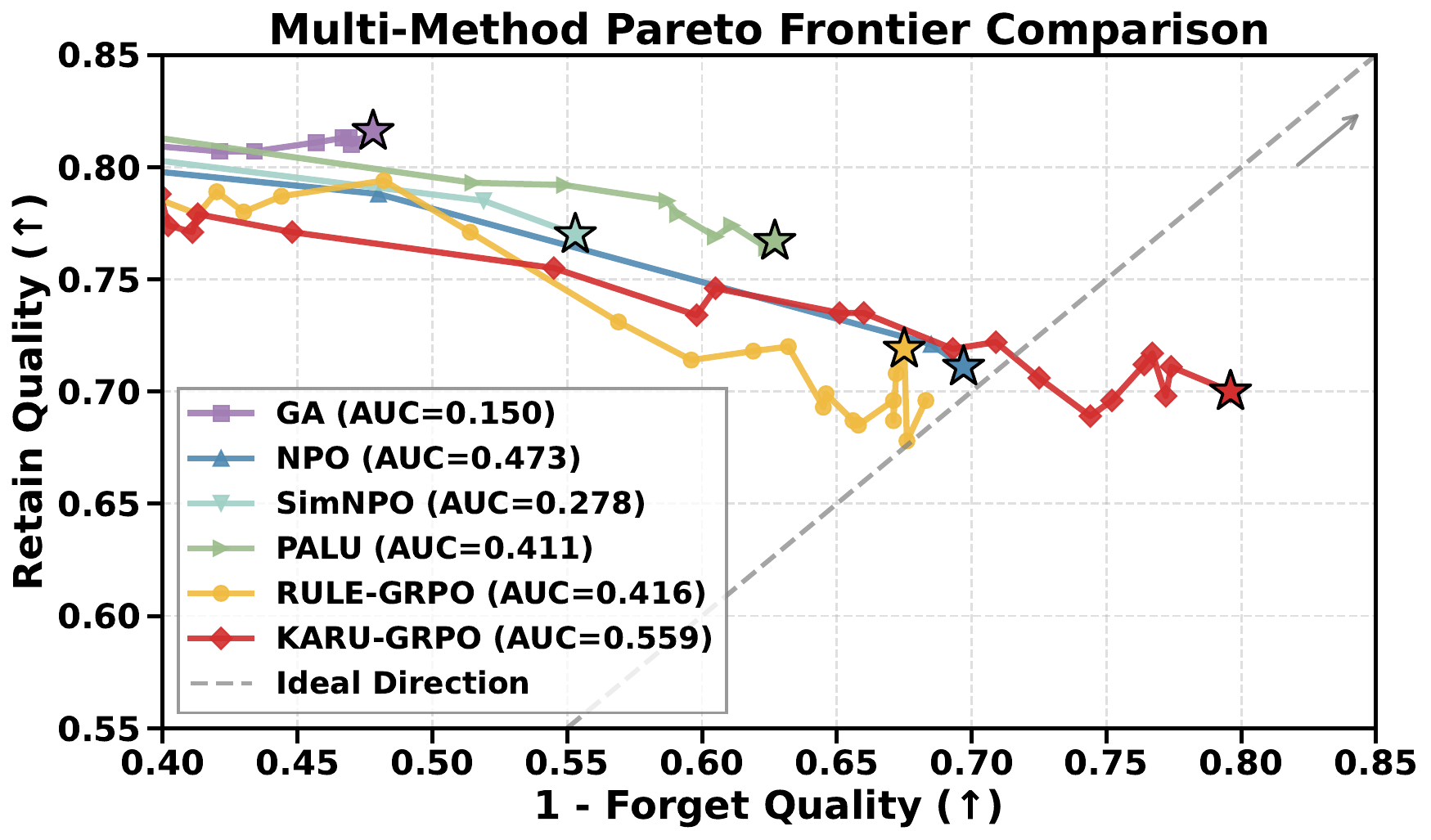}
  \includegraphics[width=0.45\linewidth]{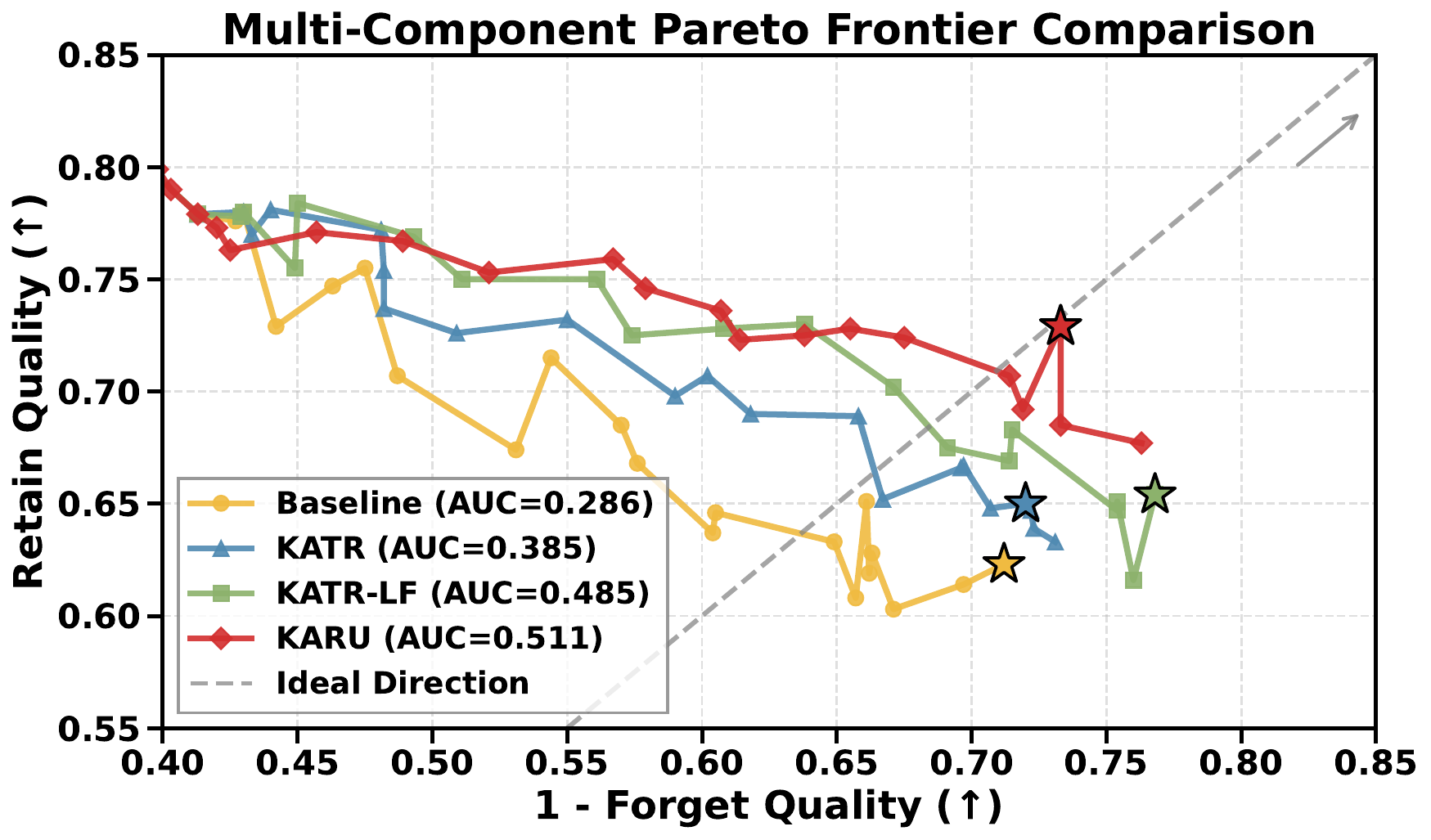}
  \caption{\textbf{Pareto frontier analysis.} The \textbf{top} panel compares forget--retain trade-offs across unlearning methods, with KARU achieving the most favorable frontier. The \textbf{bottom} panel shows progressive frontier improvements as incremental components integration under PPO. Curves trace evaluation checkpoints sorted by $1-\mathrm{FQ}$; AUC is computed by trapezoidal integration over the clipped, normalized shared range, with higher values indicating a better trade-off.}
  \label{fig:pareto-analysis}
\end{figure}

\paragraph{KARU achieves the best forget--retain trade-off.}
Fig.~\ref{fig:pareto-analysis} (top) visualizes the empirical trade-off between aggregated forgetting and retain quality. For reproducibility, the plotted score is $1-\mathrm{FQ}$ so that both axes are higher-is-better, and each curve traces evaluation checkpoints sorted by this score. We compute AUC by clipping all methods to the same feasible domain shown in the plot, min--max normalizing the two axes over that domain, and applying trapezoidal integration. Within the SFT-based family, GA and its variants remain limited in forgetting improvement; entropy-based distribution-level methods such as PALU concentrate supervision on fine-grained sensitive text spans and thereby achieve markedly better forgetting at comparable retention; and preference optimization-based approaches like NPO attain still stronger forgetting but often pay a larger retention cost. In contrast, RL-based methods push this trade-off toward a more favorable frontier: KARU achieves stronger forgetting while maintaining competitive retention, attains the highest AUC among all evaluated methods, and progressively expands the empirical Pareto frontier during RL optimization. This demonstrates the effectiveness of our KA-guided framework in optimizing the forget--retain balance.

\begin{figure}[t]
  \centering
  \includegraphics[width=0.6\linewidth]{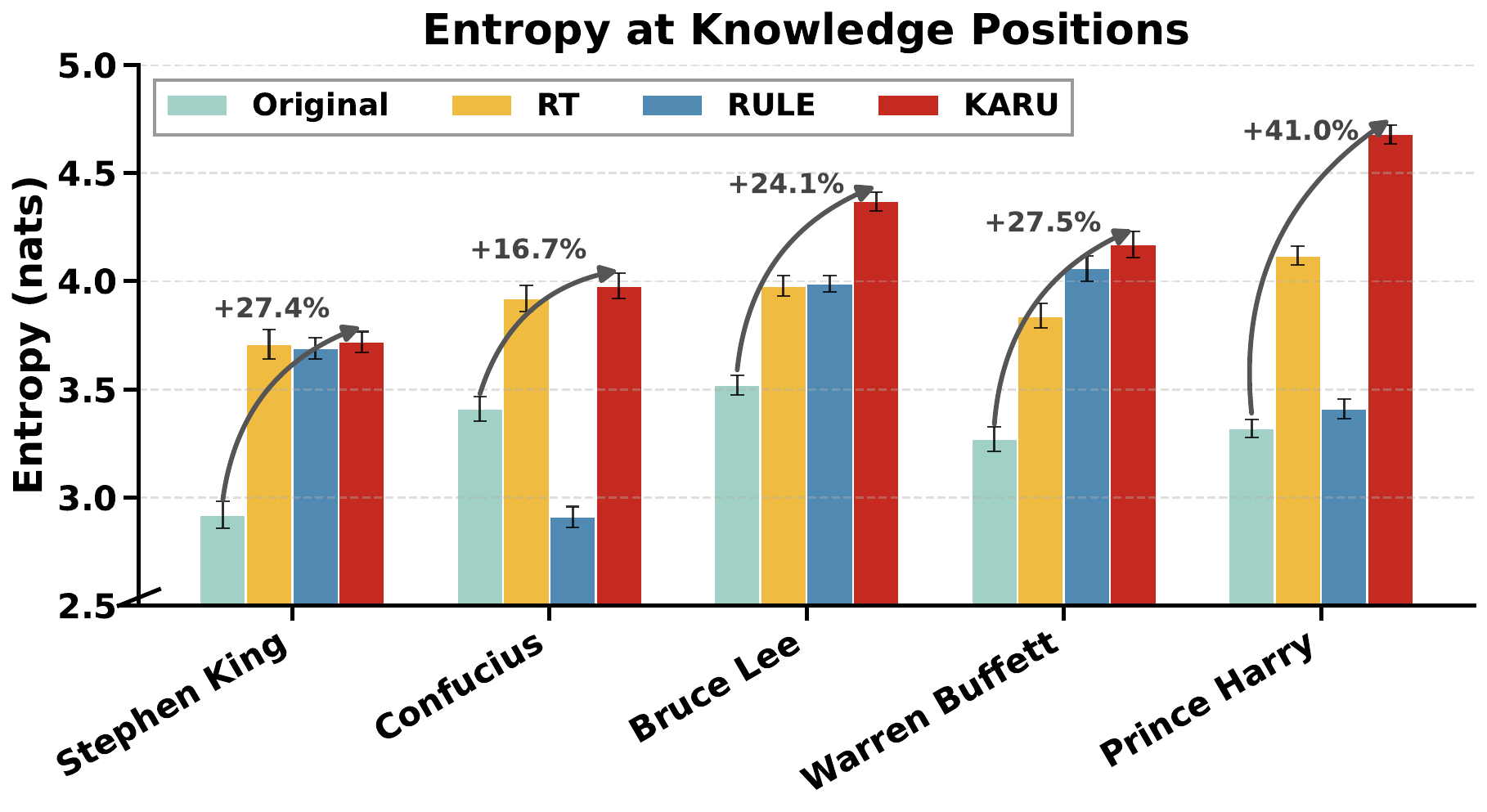}
  \caption{Predictive entropy at knowledge-token positions across RWKU subsets for various unlearning methods. Bars report the mean full-vocabulary Shannon entropy at the knowledge-token positions, computed over 300 QA samples per subset; error bars denote the standard error of the mean.}
  \label{fig:entropy-comparison}
  \vspace{-10pt}
\end{figure}

\paragraph{KARU more effectively flattens knowledge predictions.}
To further examine whether KARU effectively reduces the model's confidence in forgotten knowledge, we analyze the entropy of predictive distributions at knowledge-token positions.
Fig.~\ref{fig:entropy-comparison} shows that KARU consistently achieves higher entropy than existing unlearning methods across different forget subsets of RWKU, indicating less concentrated predictive distributions and reduced certainty in forgotten knowledge predictions by moving the model toward the desired high-uncertainty regime.

\subsection{Generalization Across Model Scales and Families}
\label{subsec:cross-model}

\begin{wraptable}{r}{0.5\linewidth}
\centering
\scriptsize
\setlength{\tabcolsep}{4pt}
\renewcommand{\arraystretch}{1.05}
\vspace{-15pt}
\caption{Results on RWKU with models of different scales and families (representative baselines; per-variant results follow the same trend).}
\label{tab:exp-model}
\adjustbox{width=\linewidth}{
\begin{tabular}{lccccccc}
\toprule
\multirow{2}{*}{\textbf{Method}} & \multicolumn{4}{c}{Forget Quality ($\downarrow$)} & \multicolumn{3}{c}{Retain Quality ($\uparrow$)} \\
\cmidrule(lr){2-5} \cmidrule(lr){6-8}
& FB & QA & AA & All & FB & QA & All \\
\midrule
\multicolumn{8}{c}{\textit{Llama3.2-3B-Instruct}} \\
\midrule
GA & 52.0 & 51.2 & 53.2 & 52.1 & \textbf{60.7} & 55.6 & \textbf{58.2} \\
NPO & 40.0 & 33.7 & 38.5 & 37.4 & 57.3 & \underline{55.8} & 56.6 \\
SimNPO & 47.0 & 38.2 & 48.4 & 44.5 & \underline{59.3} & \textbf{56.4} & \underline{57.9} \\
PALU & 42.5 & 37.5 & 41.0 & 40.3 & 58.7 & 51.0 & 54.9 \\
RT & 55.4 & 41.1 & 49.2 & 48.6 & 56.9 & 54.9 & 55.9 \\
\midrule
RULE$_{\text{PPO}}$ & 38.7 & 25.0 & \underline{33.9} & 32.5 & 55.2 & 48.1 & 51.7 \\
RULE$_{\text{GRPO}}$ & 40.5 & 24.6 & 36.3 & 33.8 & 57.3 & 47.7 & 52.5 \\
KARU$_{\text{PPO}}$ & \textbf{33.3} & \textbf{17.7} & 39.2 & \underline{30.1} & 53.4 & 49.2 & 51.3 \\
KARU$_{\text{GRPO}}$ & \underline{33.4} & \underline{18.4} & \textbf{29.3} & \textbf{27.0} & 54.5 & 44.4 & 49.5 \\
\midrule
\multicolumn{8}{c}{\textit{Qwen2.5-7B-Instruct}} \\
\midrule
GA & 41.2 & 43.9 & 56.0 & 47.0 & 49.2 & 37.8 & 43.5 \\
NPO & \underline{35.4} & 39.0 & 44.7 & 39.7 & 45.6 & 37.9 & 41.8 \\
SimNPO & 40.4 & 42.4 & 51.0 & 44.6 & 48.0 & 36.7 & 42.4 \\
PALU & \textbf{34.5} & 34.4 & 43.0 & 37.3 & 48.1 & 37.3 & 42.7 \\
RT & 44.6 & 42.4 & 54.2 & 47.1 & \textbf{57.5} & \underline{42.7} & \textbf{50.1} \\
\midrule
RULE$_{\text{PPO}}$ & 45.8 & 37.5 & 49.6 & 44.3 & 55.2 & \textbf{42.8} & 49.0 \\
RULE$_{\text{GRPO}}$ & 45.1 & 36.2 & 49.1 & 43.5 & \underline{56.4} & \textbf{42.8} & \underline{49.6} \\
KARU$_{\text{PPO}}$ & 42.7 & \underline{18.6} & \textbf{30.1} & \underline{30.4} & 50.7 & 41.1 & 45.9 \\
KARU$_{\text{GRPO}}$ & 39.2 & \textbf{16.7} & \underline{31.1} & \textbf{29.0} & 52.0 & 41.5 & 46.8 \\
\bottomrule
\end{tabular}
}
\vspace{-45pt}
\end{wraptable}

Table~\ref{tab:exp-model} reports RWKU results on Llama3.2-3B-Instruct~\citep{grattafiori2024llama3herdmodels} and Qwen2.5-7B-Instruct~\citep{qwen2025qwen25technicalreport}, demonstrating the generalization of KARU across different model scales and families.
On Llama3.2-3B, KARU improves the aggregate forget score over RULE by 7.4\% under PPO and 20.1\% under GRPO, while maintaining comparable retain quality. 
On Qwen2.5-7B, KARU achieves even larger gains, reducing the forget score by 31.4\% and 33.3\% under PPO and GRPO, respectively, with only moderate changes in retention performance. 
These results show that KARU consistently achieves a better forget--retain trade-off across different backbones, indicating that its improvements are not tied to a specific model scale or family.

\subsection{Ablation Studies}

We conduct ablation studies by successively adding KATR, LF, and ANS to the RULE baseline under both PPO and GRPO. As shown in Table~\ref{tab:ablation1-rl}, each component consistently contributes to the aspect of optimization it is designed for.

\paragraph{KATR improves knowledge-specific forgetting.}
Adding KATR consistently improves forgetting over the baseline, reducing the aggregate forget score by 8.6\% under PPO and 21.2\% under GRPO.
This indicates that token-level rewards provides clearer credit signals for suppressing target KAs than sequence-level optimization.

\paragraph{LF improves distribution-level forgetting.}
LF further reduces the aggregate forget score by 3.7 and 3.5 points under PPO and GRPO, respectively.
Notably, LF achieves the lowest AA score under both RL settings, with reductions of 31.3\% and 13.0\% under PPO and GRPO, indicating improved robustness against paraphrased knowledge recovery without compromising the forget--retain trade-off.

\begin{wraptable}{r}{0.5\linewidth}
\centering
\scriptsize
\setlength{\tabcolsep}{4pt}
\renewcommand{\arraystretch}{1.05}
\vspace{-30pt}
\caption{Incremental ablation of core components on RWKU.}
\adjustbox{width=\linewidth}{
\label{tab:ablation1-rl}
\begin{tabular}{lccccccc}
\toprule
\multirow{2}{*}{\textbf{Method}} & \multicolumn{4}{c}{Forget Quality ($\downarrow$)} & \multicolumn{3}{c}{Retain Quality ($\uparrow$)} \\
\cmidrule(lr){2-5} \cmidrule(lr){6-8}
 & FB & QA & AA & All & FB & QA & All \\
\midrule
\multicolumn{8}{l}{\textbf{PPO}} \\
~~Baseline & 34.5 & 22.2 & 34.2 & 30.3 & \underline{70.4} & 52.4 & 61.4 \\
~~\hspace{0.5em}+KATR & \textbf{24.2} & 20.0 & 39.0 & 27.7 & 68.2 & \underline{59.6} & \underline{63.9} \\
~~\hspace{1.0em}+LF    & 27.6 & \underline{17.7} & \textbf{26.8} & \underline{24.0} & 66.0 & 57.2 & 61.6 \\
~~\hspace{1.5em}+ANS    & \underline{25.1} & \textbf{13.5} & \underline{32.5} & \textbf{23.7} & \textbf{71.9} & \textbf{63.6} & \textbf{67.7} \\
\midrule
\multicolumn{8}{l}{\textbf{GRPO}} \\
~~Baseline & 32.7 & 23.7 & 41.1 & 32.5 & \textbf{77.0} & \textbf{66.9} & \textbf{71.9} \\
~~\hspace{0.5em}+KATR & 23.1 & 17.6 & 36.2 & 25.6 & 74.5 & \underline{66.4} & 70.4 \\
~~\hspace{1.0em}+LF    & \textbf{20.8} & \underline{14.0} & \textbf{31.5} & \textbf{22.1} & 73.5 & 57.0 & 65.3 \\
~~\hspace{1.5em}+ANS    & \underline{22.5} & \textbf{11.2} & \underline{34.0} & \underline{22.6} & \underline{76.5} & 65.7 & \underline{71.1} \\
\bottomrule
\end{tabular}
}
\end{wraptable}

\paragraph{ANS mitigates collateral degradation.}
By reweighting vulnerable retain samples, ANS substantially improves retention.
It improves both axes under PPO, reducing the forget score by 1.3\% and increasing the retain score by 9.9\%, while recovering 8.9\% retention under GRPO.
This validates the effectiveness of proximity-aware sampling for mitigating collateral degradation.

Fig.~\ref{fig:pareto-analysis} (bottom) further shows that progressively incorporating KATR, LF, and ANS consistently shifts the Pareto frontier toward a better forget--retain trade-off. The complete KARU framework improves the AUC by 78.7\% over RULE. 
Qualitative case studies are provided in Appendix~\ref{sec:case-study}.

\section{Conclusion}

This work revisits the forgetting target of LLM unlearning and reformulates it as suppressing knowledge associations rather than entire generated responses. Based on this perspective, we propose KARU, a unified RL framework that approximately optimizes the KA objective through three complementary components: token-level rewards for finegrained credit assignment, logits flattening for distributionlevel forgetting, and adaptive neighborhood sampling for proximity-aware retain reweighting. Extensive experiments demonstrate that KARU consistently improves forgetting performance across diverse RL algorithms while maintaining competitive retention and general utility.

\paragraph{Limitations and Future Work}
Our framework currently relies on knowledge-token extraction via named entity recognition, which could be improved by more accurate knowledge localization. Moreover, it adopts a two-stage training pipeline, and developing an end-to-end unlearning framework is a promising direction for future research.

\bibliographystyle{plainnat}
\bibliography{refs}

\newpage
\appendix

\section{Knowledge Association Analysis}
\label{app:knowledge-analysis}

\paragraph{Knowledge tokens are sparse.}
Table~\ref{tab:knowledge_ratio} reports the proportion of knowledge-related tokens in generated responses on RWKU and MUSE, for both the original model $\pi_{\text{org}}$ and the rejection-tuned (RT) model $\pi_{\text{rej}}$.
Knowledge-related tokens consistently account for only a small fraction of generated responses, which is below $10\%$ on RWKU and below $17\%$ on MUSE, supporting the claim in Sec.~\ref{subsec:token-level} that sequence-level rewards inevitably distribute supervision to a large number of knowledge-irrelevant tokens.

\begin{table}[htbp]
\centering
\small
\setlength{\tabcolsep}{6pt}
\renewcommand{\arraystretch}{1.1}
\caption{Average proportion (\%) of knowledge-related tokens in generated responses. $\pi_{\text{org}}$: Original Model; $\pi_{\text{rej}}$: RT Model.}
\label{tab:knowledge_ratio}
\begin{tabular}{llccc}
\toprule
\textbf{Benchmark} & \textbf{Model} & \textbf{Forget(\%)} & \textbf{Retain(\%)} & \textbf{All(\%)} \\
\midrule
\multirow{2}{*}{RWKU}
& $\pi_{\text{org}}$ & 9.3 & 9.8 & 9.5 \\
& $\pi_{\text{rej}}$ & 4.4 & 8.6 & 6.4 \\
\midrule
\multirow{2}{*}{MUSE}
& $\pi_{\text{org}}$ & 15.0 & 17.1 & 16.0 \\
& $\pi_{\text{rej}}$ & 12.5 & 12.8 & 12.6 \\
\bottomrule
\end{tabular}
\vspace{-6pt}
\end{table}

\paragraph{Distribution shifts concentrate on knowledge-token positions.}
We construct 300 question--answer samples from the Stephen King subset of RWKU~\citep{jin2024rwkubenchmarkingrealworldknowledge}, locate knowledge spans via dictionary matching, and compare the predictive distributions of the original and unlearned models at knowledge-predicting and normal-predicting positions using paired $t$-tests.
Under RULE, the KL divergence from the original model increases from $0.246$ at normal positions to $0.770$ at knowledge-token positions ($p<0.001$), and the entropy increase likewise grows from $0.093$ to $0.405$ ($p<0.001$). KARU further raises the KL divergence at knowledge positions to $1.133$ while keeping it comparable to RULE at normal positions ($0.219$).
Moreover, the shift concentrates on the first token of each knowledge span: when given the context ``was born in'', the model's prediction of ``Stephen'' is disrupted, whereas subsequent tokens within the span remain largely unaffected.
This pattern directly supports Definition~\ref{def:knowledge_assoc}, which targets the moment the model commits to the target knowledge.

\section{Implementation Details of KATR}
\label{app:token-reward-impl}

\paragraph{Knowledge-aware token partition.}
Generated tokens are partitioned into three functional groups: \textit{knowledge tokens} $\T_{\text{know}}$ that encode factual attributes of the forget entity, \textit{refusal tokens} $\T_{\text{reject}}$ belonging to predefined refusal templates (e.g., ``I cannot provide'', ``I have not been trained''), and other \textit{normal tokens} $\T_{\text{normal}}$.
$\T_{\text{know}}$ is constructed offline from $\D_f$: we apply BERT-Large-NER~\citep{devlin2019bert} to forget-set query--response pairs, followed by lightweight post-processing that removes noisy entities, merges coreferential mentions, and supplements missing target entities from ground-truth responses. $\T_{\text{reject}}$ is initialized from predefined refusal templates following RULE.

\paragraph{Knowledge Associated Token Reward.}
This classification assigns distinct reward signals to knowledge-related tokens, refusal behaviors, and regular generation, consistent with the probability-shift principle in Eq.~\eqref{eq:ideal_reward}. Since benchmarks such as RWKU and MUSE~\citep{jin2024rwkubenchmarkingrealworldknowledge,maini2024tofu} focus on forgetting individual-specific information, we call the individual whose information is removed the \emph{forget target}. As mentions of the forget target may convey either forgotten knowledge or appropriate refusal depending on context, mentions appearing in refusal templates or in the input query itself are assigned to $\T_{\text{reject}}$ as desirable refusal behaviors, and all remaining mentions are assigned to $\T_{\text{know}}$. The token-level rewards for sampled responses $y$ are:

\begin{equation}
\begin{aligned}
\hat{r}_t(x_f,y) &=
\begin{cases}
+1.0, & t \in \T_{\text{reject}} \\
-1.0, & t \in \T_{\text{know}} \\
0.0, & \text{otherwise}
\end{cases}
\qquad
\hat{r}_t(x_r,y) &=
\begin{cases}
-1.0, & t \in \T_{\text{reject}} \\
+1.0, & t \in \T_{\text{know}} \\
0.0, & \text{otherwise}
\end{cases}
\end{aligned}
\label{eq:token_reward}
\end{equation}
for a forget query $x_f \in \D_f$ and a retain query $x_r \in \D_r$, respectively.

\subsection{Supervision Concentration Analysis}
We analyze how KATR redistributes optimization supervision compared with sequence-level rewards using over 800 RWKU validation queries and responses generated by the RT model during rollout.

\paragraph{Metrics.}
For a response $y=(y_1,\dots,y_T)$, let
$\tilde{r}_t=\sum_{k=t}^{T}r_k$ denote the accumulated return at position $t$, where $r_k$ is the reward assigned to token $k$. The effective supervision magnitude is $|\tilde{r}_t|$, and the supervision share of any token subset $\T$ is defined as
\begin{equation}
R_{\mathrm{sup}}(\T) =
\frac{\sum_{t\in\T}|\tilde{r}_t|}
{\sum_{t=1}^{T}|\tilde{r}_t|}.
\label{eq:rsup_per_token}
\end{equation}

\begin{figure}[t]
    \centering
    \begin{subfigure}[t]{0.4\linewidth}
        \centering
        \includegraphics[width=\linewidth]{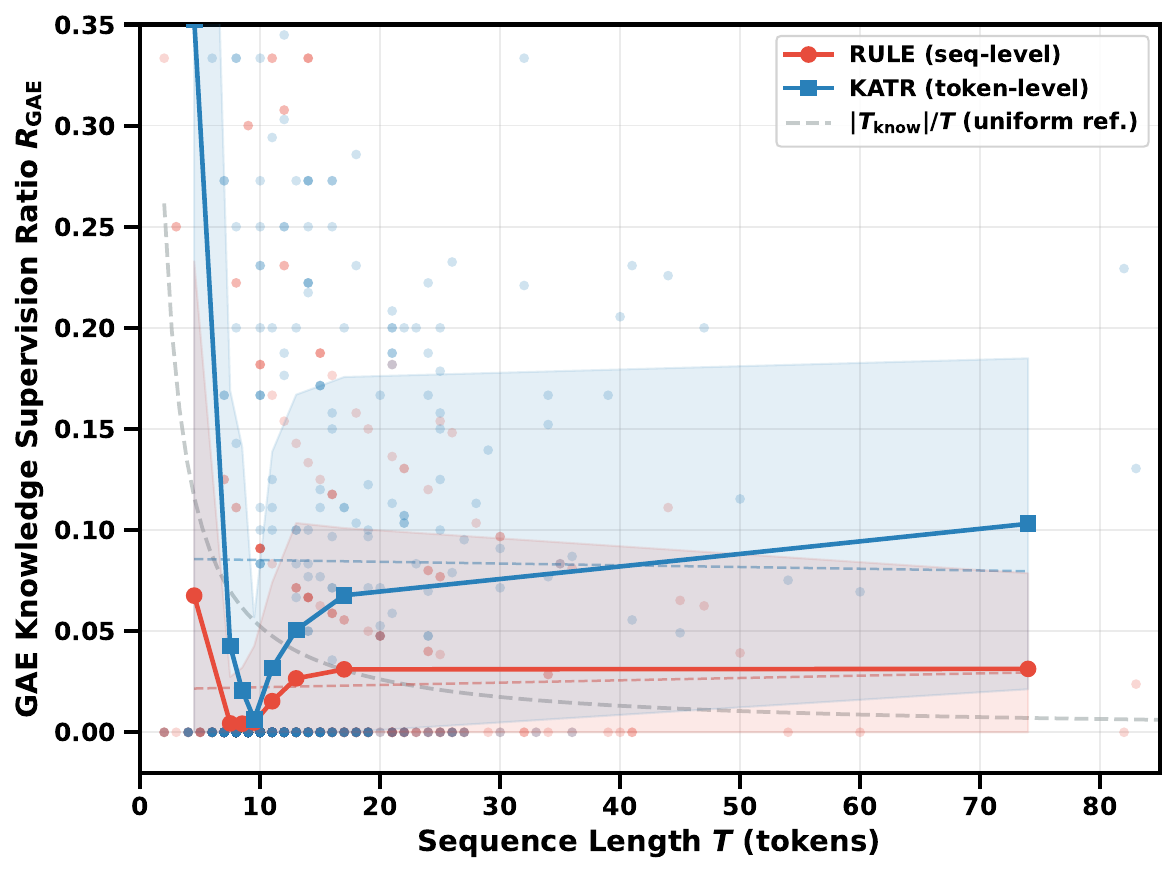}
        \label{fig:supervision_ratio}
    \end{subfigure}
    \begin{subfigure}[t]{0.45\linewidth}
        \centering
        \includegraphics[width=\linewidth]{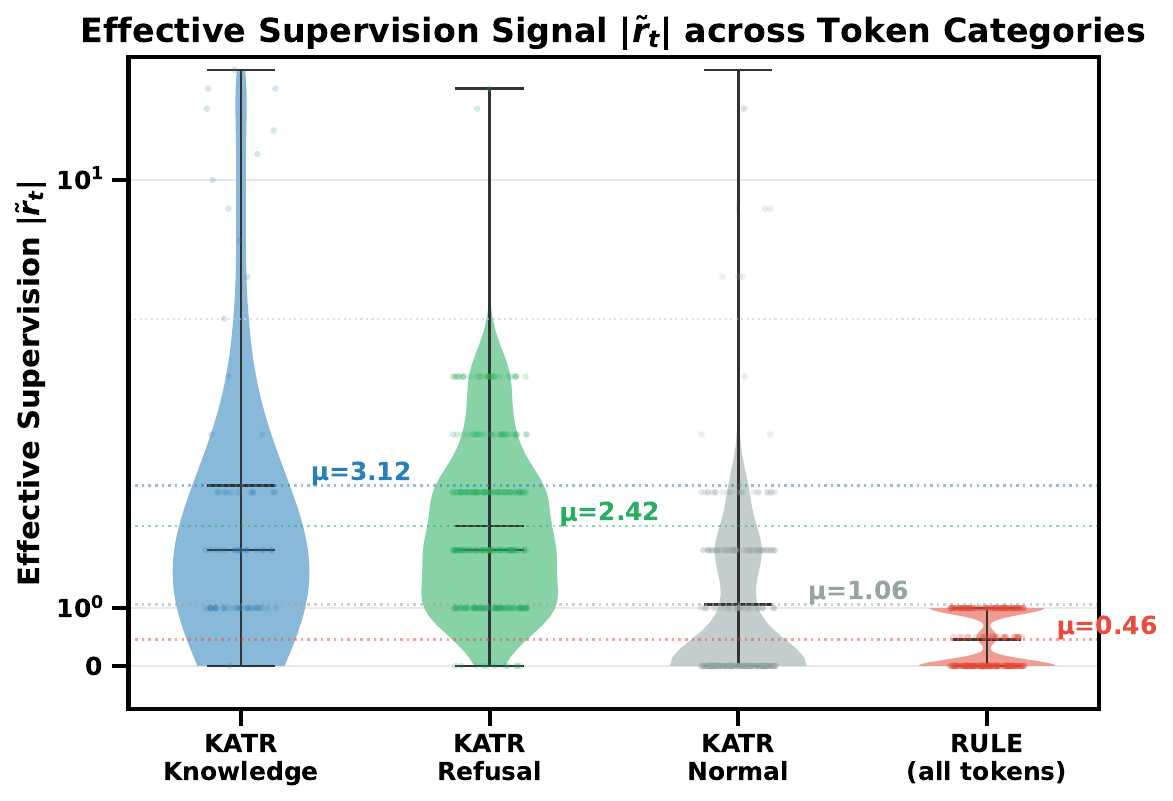}
        \label{fig:supervision_violin}
    \end{subfigure}
    \vspace{-5pt}
    \caption{Effective supervision allocation under KATR. (a) Knowledge-token supervision share versus the uniform reference $|\T_{\mathrm{know}}|/T$ (dashed line). (b) Distribution of $|\tilde r_t|$ across token categories, showing stronger supervision for knowledge and refusal tokens than under RULE.}
    \vspace{-7pt}
    \label{fig:supervision_analysis}
\end{figure}

\paragraph{KATR mitigates supervision dilution.}
Fig.~\ref{fig:supervision_analysis}(a) shows the knowledge-token supervision share under different response lengths.
Under RULE, which rewards only the EOS token, reward propagation yields identical supervision magnitudes at all positions, i.e., $R_{\mathrm{sup}}(\T_{\mathrm{know}})=|\T_{\mathrm{know}}|/T$, so the supervision on each knowledge token is diluted as sequence length increases.
In contrast, KATR consistently allocates a larger supervision share to knowledge tokens than the uniform reference, with no evident degradation as $T$ grows.

\paragraph{Semantic-aware supervision.}
Fig.~\ref{fig:supervision_analysis}(b) shows that RULE assigns identical supervision magnitudes across token categories, whereas KATR provides stronger supervision to knowledge and refusal tokens than to normal tokens, confirming that KATR redistributes supervision toward semantically meaningful positions for more effective credit assignment than sequence-level rewards.

\section{Analysis of Logits Flattening}
\label{sec:analysis-of-lf}

To validate entropy as a reference-free surrogate, we construct 50 synthetic query-response pairs of fictitious knowledge by prompting GPT-5.5 following the format of RWKU. Since fabricated facts are unlikely to be memorized by the original model, these pairs approximate knowledge that the model has never acquired. We then compare the predictive distributions at the knowledge-token positions of these fictitious samples against 50 real ones. As shown in Fig.~\ref{fig:knowledge-analysis} (right), fictitious knowledge exhibits substantially higher entropy than memorized knowledge, and existing unlearning methods consistently increase entropy on forgotten knowledge, indicating that the absence of knowledge naturally corresponds to less concentrated predictions, supporting entropy maximization as a reference-free surrogate for distributional forgetting.

\paragraph{Entropy as the preferred statistic.}
We further compare five distribution-level statistics including Shannon and R\'{e}nyi entropy, top-1 probability, top-5 mass, and Gini on the real/fictitious pairs, using the signal-to-noise ratio $\text{SNR} = |\text{mean }\Delta| / \text{std}(\Delta)$ for separation and the coefficient of variation $\text{CV} = \sigma / |\mu|$ for stability.
As shown in Table~\ref{tab:statistic_comparison}, Shannon entropy achieves the best separation (SNR $1.94$) with strong relative effect size and stability, outperforming top-1 probability (SNR $1.51$), which supports entropy maximization over confidence reduction.

\begin{table}[t]
\centering
\small
\setlength{\tabcolsep}{3.5pt}
\renewcommand{\arraystretch}{1.12}
\caption{\textbf{Comparison of distribution-level statistics between memorized ($\mu_{\text{real}}$) and fictitious ($\mu_{\text{fict}}$) knowledge-token predictions.} SNR: separation across prompts (higher is better); CV: $\sigma/|\mu|$ (lower is more stable).}
\label{tab:statistic_comparison}
\begin{tabular}{lccccccc}
\toprule
\textbf{Statistic} & $\mu_{\text{real}}$ & $\mu_{\text{fict}}$ & $|\Delta|$ & \textbf{Rel.\ $\Delta$} & \textbf{SNR} & $\text{CV}_{\text{real}}$ & $\text{CV}_{\text{fict}}$ \\
\midrule
Shannon H     & 1.42 & 3.96 & 2.54 & 1.80 & \textbf{1.94} & 0.63 & 0.21 \\
R\'{e}nyi-2 H & 0.79 & 2.47 & 1.68 & \textbf{2.14} & 1.63 & 0.87 & 0.27 \\
Top-1 Prob    & 0.68 & 0.25 & 0.43 & 0.63 & 1.51 & 0.35 & 0.51 \\
Top-5 Mass    & 0.89 & 0.51 & 0.37 & 0.42 & 1.94 & 0.14 & 0.26 \\
Gini          & 1.00 & 1.00 & 0.00 & 0.00 & 1.40 & 0.00 & 0.00 \\
\bottomrule
\end{tabular}
\end{table}

\paragraph{Top-$k$ entropy approximation.}
Since full-vocabulary entropy over $|\V|\approx 128\mathrm{k}$ tokens is costly, we approximate it with Top-$k$ probabilities under the following assumption.

\begin{assumption}[Top-$k$ Probability Concentration]
\label{ass:topk}
For knowledge-token prediction positions, the majority of the probability mass is concentrated in the Top-$k$ tokens $\T_k$: $\sum_{v\in\T_k} \pi_\theta(v\mid z_{t_0}) \ge 1-\varepsilon$ with $\varepsilon\ll1$.
\end{assumption}

\begin{lemma}[Entropy Approximation under Top-$k$ Concentration]
\label{lem:entropy_approx}
Let $p$ be a probability distribution over vocabulary $\V$, and let $\T_k$ be the set of Top-$k$ tokens by probability with $S = \sum_{i\in\T_k}p_i$. Define the normalized Top-$k$ distribution $\hat{p}_i = p_i / S$ for $i\in\T_k$. Then:
\begin{equation}
\boxed{H(p) = S\cdot H(\hat{p}) - S\log S + H_{\mathrm{tail}},}
\label{eq:entropy_identity}
\end{equation}
where $H_{\mathrm{tail}} = -\sum_{i\notin\T_k} p_i\log p_i$ and $\log$ denotes the natural logarithm. Furthermore, if $S \ge 1-\varepsilon$, then
\begin{equation}
\bigl| H(p) - H(\hat{p}) \bigr| \le \varepsilon\log k + \varepsilon + \varepsilon\log|\V| + \varepsilon|\log\varepsilon| = O(\varepsilon\log|\V|).
\label{eq:entropy_error}
\end{equation}
\end{lemma}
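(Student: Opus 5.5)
The plan is to prove the identity \eqref{eq:entropy_identity} by splitting the entropy sum over $\T_k$ and its complement. The error bound \eqref{eq:entropy_error} then follows by rearranging the identity and bounding each of its three correction terms separately. No earlier result of the paper is needed beyond the definitions; Assumption~\ref{ass:topk} only serves to motivate the hypothesis $S\ge 1-\varepsilon$.

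\textbf{Step 1 (identity).} I will write $H(p) = -\sum_{i\in\T_k} p_i\log p_i + H_{\mathrm{tail}}$. For $i\in\T_k$, I substitute $p_i = S\hat p_i$, so that $\log p_i = \log\hat p_i + \log S$. This gives
\[
-\sum_{i\in\T_k} S\hat p_i(\log \hat p_i+\log S) = S\,H(\hat p) - S\log S,
\]
using $\sum_{i\in\T_k}\hat p_i=1$. This is exactly \eqref{eq:entropy_identity}.

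\textbf{Step 2 (decomposition of the error).} Subtracting $H(\hat p)$ from \eqref{eq:entropy_identity} gives
\[
H(p)-H(\hat p) = -(1-S)H(\hat p) - S\log S + H_{\mathrm{tail}}.
\]
By the triangle inequality I then bound the three terms separately, writing $\tau = 1-S\le\varepsilon$ for the tail mass.

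\textbf{Step 3 (first two terms).} Since $\hat p$ is supported on $k$ tokens, $0\le H(\hat p)\le\log k$, so $(1-S)H(\hat p)\le\varepsilon\log k$. For the second term I use $\log S \ge 1-1/S$, which gives $0\le -S\log S\le S(1/S-1)=1-S\le\varepsilon$.

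\textbf{Step 4 (tail entropy).} This is the only step needing care. If $\tau=0$ the tail term vanishes. Otherwise I normalize the tail as $q_i=p_i/\tau$ for $i\notin\T_k$ and apply the same grouping as in Step 1:
\[
H_{\mathrm{tail}} = \tau H(q) - \tau\log\tau .
\]
The first part satisfies $\tau H(q)\le \tau\log(|\V|-k)\le\varepsilon\log|\V|$. For the second part, the function $\tau\mapsto-\tau\log\tau$ is increasing on $(0,1/e]$, so $-\tau\log\tau\le\varepsilon|\log\varepsilon|$ provided $\varepsilon\le 1/e$.

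\textbf{Main obstacle and conclusion.} The monotonicity argument in Step 4 is where I expect the only subtlety. It requires $\varepsilon\le 1/e$, which is consistent with $\varepsilon\ll 1$ in Assumption~\ref{ass:topk}, and I will state this restriction explicitly in the proof. Summing the four bounds from Steps 3 and 4 gives \eqref{eq:entropy_error}. The $O(\varepsilon\log|\V|)$ form follows because $\log k\le\log|\V|$, and because $\varepsilon|\log\varepsilon|$ is a lower-order term once $\varepsilon\gtrsim 1/|\V|$. This last point should be phrased as an asymptotic remark in the regime of interest rather than as a uniform inequality, since $|\log\varepsilon|$ exceeds $\log|\V|$ when $\varepsilon<1/|\V|$.
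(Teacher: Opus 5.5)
Your proposal is correct and follows the paper's proof essentially step for step: the same split of $H(p)$ over $\T_k$ and its complement, and the same three-term decomposition of $H(p)-H(\hat p)$. Along the way you replace the paper's Taylor-expansion and ``standard entropy bound'' steps with explicit arguments, work with $1-S\le\varepsilon$ instead of the paper's shortcut of setting $S=1-\varepsilon$, and rightly note that the $O(\varepsilon\log|\V|)$ form is not uniform for $\varepsilon<1/|\V|$. Your extra hypothesis $\varepsilon\le 1/e$ can in fact be dropped: combine your bound $-S\log S\le\tau$ with the $-\tau\log\tau$ term, and since $\tau\mapsto\tau-\tau\log\tau$ is increasing on $(0,1]$, these two terms together are at most $\varepsilon+\varepsilon|\log\varepsilon|$ for every $\varepsilon\in(0,1]$.
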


\begin{proof}
Decompose the full-vocabulary entropy:
\begin{equation}
H(p) = -\sum_{i\in\T_k} p_i\log p_i \; -\sum_{i\notin\T_k} p_i\log p_i.
\end{equation}
For the Top-$k$ term, substitute $p_i = S\hat{p}_i$:
\begin{align}
-\sum_{i\in\T_k} p_i\log p_i &= -\sum_{i\in\T_k} S\hat{p}_i\log(S\hat{p}_i) \\
&= -S\sum_{i\in\T_k}\hat{p}_i(\log\hat{p}_i + \log S) \\
&= S\cdot H(\hat{p}) - S\log S,
\end{align}
establishing Eq.~\eqref{eq:entropy_identity}. For the error bound, set $S=1-\varepsilon$. The difference expands as:
\begin{align}
H(p) - H(\hat{p}) &= (S-1)H(\hat{p}) - S\log S + H_{\mathrm{tail}}.
\end{align}
The first term satisfies $|(S-1)H(\hat{p})| = \varepsilon H(\hat{p}) \le \varepsilon\log k$. The second term satisfies $|-S\log S| = O(\varepsilon)$ by Taylor expansion of $f(x)=x\log x$ near $x=1$. For the tail term, a standard entropy bound for a distribution with total mass $R = 1-S = \varepsilon$ over $|\V|-k$ elements yields $H_{\mathrm{tail}} \le \varepsilon\log(|\V|-k) + \varepsilon|\log\varepsilon|$. Summing the three terms gives Eq.~\eqref{eq:entropy_error}.
\end{proof}

Lemma~\ref{lem:entropy_approx} justifies optimizing Top-$k$ entropy as a tractable approximation of full-vocabulary entropy, whose empirical validity is supported by the probability-mass concentration analysis in Fig.~\ref{fig:tail_mass_cdf}.

\section{Analysis of Adaptive Neighborhood Sampling}
\label{app:analysis-of-ans}

\begin{figure}[t]
  \centering
  \includegraphics[width=0.9\linewidth]{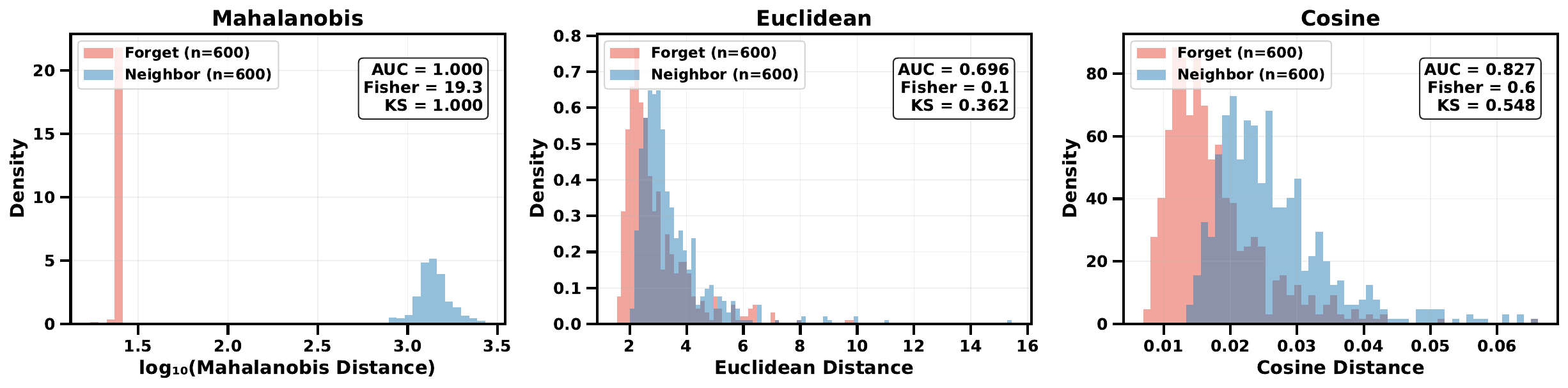}
  \caption{\textbf{Comparison of distance metrics for forget-retain separation.} Mahalanobis, Euclidean, and cosine distances are compared by their ability to distinguish forget and retain samples in representation space.}
  \label{fig:distance_metrics_comparison}
  \vspace{-6pt}
\end{figure}

ANS uses representation-space proximity to identify retain samples closely associated with the forget distribution. We instantiate the metric in Eq.~\eqref{eq:weighted_dist} using Mahalanobis distance:
\begin{equation}
d_{\mathrm{M}}(x,\D_f)=
\sqrt{(\mathbf{h}_{x}-\boldsymbol{\mu}_f)^\top
\boldsymbol{\Sigma}_f^{-1}
(\mathbf{h}_{x}-\boldsymbol{\mu}_f)},
\end{equation}
where $\boldsymbol{\mu}_f$ and $\boldsymbol{\Sigma}_f$ denote the mean and regularized covariance of forget-set representations, and $\mathbf{h}_x$ is the pre-unlearning RT representation. Mahalanobis distance accounts for covariance structure, emphasizing deviations along informative directions while downweighting high-variance dimensions.

Retain samples are ranked by ascending Mahalanobis distance, with smaller ranks indicating stronger association with forgotten knowledge. The rank-based weighting scheme depends on relative ordering rather than absolute distances, making it robust to monotonic transformations and eliminating the need for additional normalization across forget subsets.

To evaluate its ability to separate forget samples from potentially vulnerable retain samples, we compare Mahalanobis distance with Euclidean and cosine distances:
\begin{equation}
d_{\text{E}}(x, \D_f) = \|\mathbf{h}_{x} - \boldsymbol{\mu}_f\|_2, \qquad
d_{\text{C}}(x, \D_f) = 1 - \frac{\mathbf{h}_{x} \cdot \boldsymbol{\mu}_f}{\|\mathbf{h}_{x}\| \, \|\boldsymbol{\mu}_f\|}.
\label{eq:cosine}
\end{equation}

Fig.~\ref{fig:distance_metrics_comparison} shows that Mahalanobis distance achieves the highest ROC-AUC among the three metrics, validating its effectiveness for identifying retain samples close to the forget distribution.

\section{Experimental Details}
\label{app:training-details}

All experiments were conducted on a server equipped with 4 NVIDIA H20 GPUs, each with 96 GB memory. The server is equipped with two Intel Xeon Platinum 8469C CPUs and 968 GB system memory.

KARU starts from the RT model provided by the rejection initialization stage and performs RL-based optimization on top of it, with training hyperparameters listed in Table~\ref{tab:stage2-hyperparam2}. 
For the RT and the RULE baseline, we follow the settings reported in RULE.

\begin{table}[ht]
    \centering
    \small
    \renewcommand{\arraystretch}{1.0}
    \caption{RL training hyperparameters.}
    \label{tab:stage2-hyperparam2}
    \begin{tabular}{ccccccc}
        \toprule
        \textbf{Benchmark} & \textbf{Method} &
        \textbf{LR} & \textbf{$\bm{\lambda}_{\mathrm{LF}}$} & \textbf{Top-$k_{\mathrm{LF}}$} & \textbf{$\bm{\alpha}_{\mathrm{WS}}$} & \textbf{$\bm{\beta}_{\mathrm{WS}}$} \\
        \midrule
        \multirow{2}{*}{RWKU} & KARU$_{\mathrm{PPO}}$ & 2e-6 & 1e-2 & 50 & 15.0 & 1.0 \\
        & KARU$_{\mathrm{GRPO}}$ & 2e-6 & 1e-2 & 100 & 10.0 & 3.0 \\
        \cmidrule(lr){1-7}
        \multirow{2}{*}{MUSE} & KARU$_{\mathrm{PPO}}$ & 2e-6 & 5e-2 & 100 & 25.0 & 2.0 \\
        & KARU$_{\mathrm{GRPO}}$ & 2e-6 & 1e-2 & 100 & 10.0 & 2.0 \\
        \bottomrule
    \end{tabular}
\end{table}

\section{Additional Experimental Results}
\label{app:additional-results}

\subsection{MUSE Benchmark Results}

\begin{table}[ht]
\centering
\small
\setlength{\tabcolsep}{2.5pt}
\renewcommand{\arraystretch}{1.05}
\caption{Results on the MUSE benchmark. Best in \textbf{bold}, second best \underline{underlined}. $\downarrow$/$\uparrow$: lower/higher is better.}
\label{tab:muse-results}
\begin{tabular}{l|ccc}
\toprule
\textbf{Method} & \textbf{No Verb Mem.} ($\downarrow$) & \textbf{No Know Mem.} ($\downarrow$) & \textbf{Utility Preserv.} ($\uparrow$) \\
\midrule
Original & 49.5 & 48.1 & 71.0 \\
\midrule
GA & \textbf{0.0} & \textbf{0.0} & 0.0 \\
NPO & \underline{0.3} & 4.7 & 22.1 \\
SimNPO & \textbf{0.0} & \textbf{0.0} & 0.0 \\
PALU & 27.0 & 30.8 & 49.5 \\
RT & 10.4 & 6.1 & 24.5 \\
\midrule
RULE$_{\text{PPO}}$  & 2.2 & 2.6 & 13.6 \\
KARU$_{\text{PPO}}$  & 7.8 & \textbf{0.0} & 30.3 \\
\midrule
RULE$_{\text{GRPO}}$ & 7.3 & 1.4 & \underline{55.6} \\
KARU$_{\text{GRPO}}$ & 6.9 & \underline{0.7} & \textbf{60.0} \\
\bottomrule
\end{tabular}
\end{table}

For MUSE, we follow RULE~\citep{zhang2025rulereinforcementunlearningachieves} and use ICLM-7B~\citep{shi2024incontextpretraining} as the backbone. Following the benchmark protocol, we evaluate \textbf{No Verb Mem.}, \textbf{No Know Mem.}, and \textbf{Utility Preserv.} using ROUGE-L recall, consistent with the reward design of RULE and KARU. Hyperparameters are reported in Table~\ref{tab:stage2-hyperparam2}.

Table~\ref{tab:muse-results} shows that KARU achieves the highest utility preservation among compared methods while maintaining competitive forgetting performance. Relative to RT, KARU$_{\mathrm{PPO}}$ increases utility preservation by 23.7\%, while reducing No Know Mem. to 0.0. KARU$_{\mathrm{GRPO}}$ further raises utility preservation to 60.0, a 145\% improvement over RT, with No Verb Mem. and No Know Mem. reduced to 6.9 and 0.7 respectively.

\subsection{Pareto Analysis Under GRPO}
\begin{figure}[ht]
  \centering
  \includegraphics[width=0.5\linewidth]{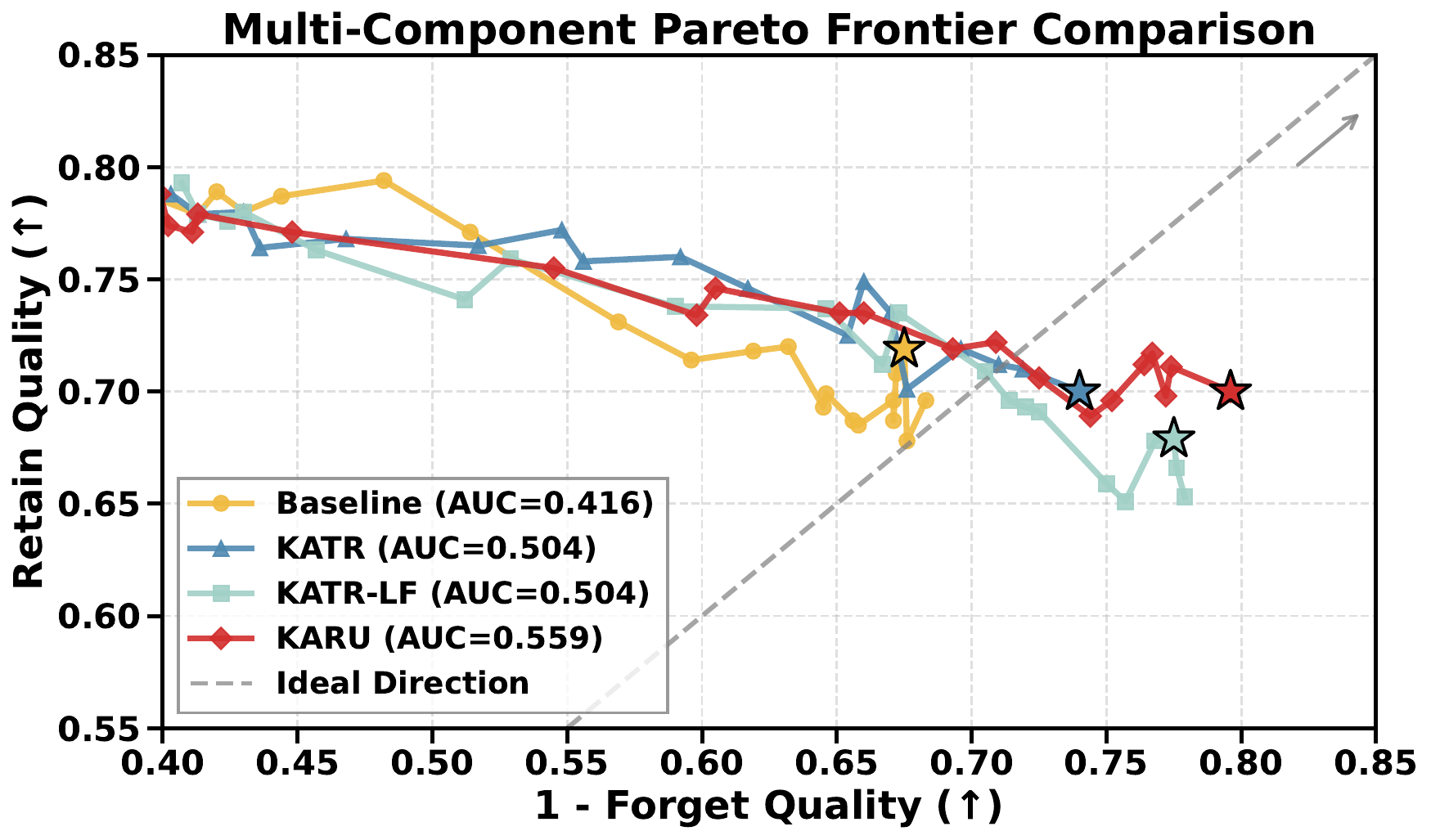}
  \caption{Pareto frontier analysis. The progressive frontier improvement with incremental component intergration under GRPO.}
  \label{fig:pareto-grpo}
\end{figure}
As shown in Fig.~\ref{fig:pareto-grpo}, progressively adding KATR, LF, and ANS improves the forget--retain trade-off. The full KARU framework achieves an AUC of 0.559 under GRPO, a 34.4\% improvement over the baseline, demonstrating that KARU delivers consistent improvements across different RL optimizers.

\subsection{Hyperparameter Sensitivity Analysis}
\label{app:hyperparameter}

\begin{figure*}[t]
    \centering
    \includegraphics[width=0.9\linewidth]{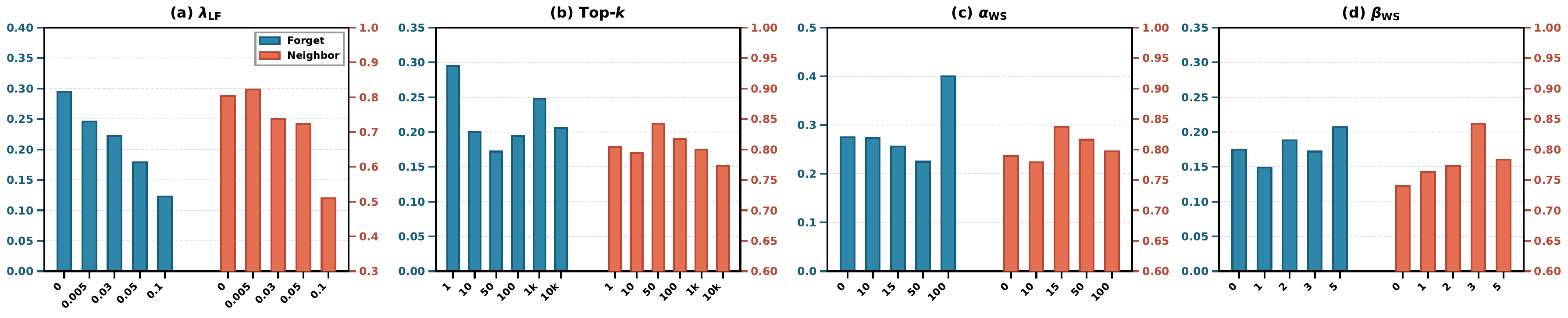}
    \caption{\textbf{Hyperparameter sensitivity analysis of KARU.} Effects of: (a) LF coefficient $\lambda_{\mathrm{LF}}$, (b) LF Top-$k$, (c) ANS neighborhood ratio $\alpha_{\mathrm{WS}}$, and (d) ANS weighting strength $\beta_{\mathrm{WS}}$ on the forget--retain trade-off.}
    \label{fig:parameter_sensitivity}
\end{figure*}

We evaluate KARU's sensitivity to four key hyperparameters on four RWKU subsets: the LF coefficient $\lambda_{\mathrm{LF}}$, the Top-$k$ value, and the ANS parameters $\alpha_{\mathrm{WS}}$ and $\beta_{\mathrm{WS}}$. As shown in Fig.~\ref{fig:parameter_sensitivity}, KARU maintains a stable trade-off across a broad range of settings.

Increasing $\lambda_{\mathrm{LF}}$ strengthens distributional forgetting by flattening predictions at knowledge-token positions, but excessive regularization interferes with retained knowledge and degrades neighbor retention. For Top-$k$, performance is stable within a moderate range: too small a value captures insufficient probability mass for reliable entropy estimation, while too large a value broadens the flattening constraint and risks collateral damage to retained knowledge.

For ANS, increasing $\alpha_{\mathrm{WS}}$ initially improves neighbor retention by extending amplified supervision to more vulnerable samples, but excessive coverage weakens locality-aware weighting and reduces the relative optimization pressure on forgetting. Increasing $\beta_{\mathrm{WS}}$ strengthens the emphasis on nearby retain samples, but excessive weighting overemphasizes retention and weakens forgetting.

Overall, these results demonstrate KARU's robustness to hyperparameter variations, with each parameter providing an interpretable means of balancing forgetting and retention.

\subsection{Computational Overhead Analysis}
\label{app:complexity}

We evaluate the practical computational overhead introduced by KARU compared with RULE.
Although KARU introduces additional components including KATR, ANS, and LF, these operations only involve lightweight token-level reward computation, representation-based sampling, and Top-$k$ entropy estimation, adding limited overhead to the overall RL training pipeline.
In our experimental environment with H20 GPUs, KARU requires approximately \textbf{1.43} minutes per optimization step on average, compared with \textbf{1.40} minutes for RULE.
This negligible difference indicates that KARU improves unlearning performance without substantial computational overhead.

\subsection{Qualitative Case Study}
\label{sec:case-study}
To qualitatively illustrate the behavioral differences among methods, Table~\ref{tab:case-study-forget} presents model responses to two forget queries from RWKU, corresponding to the entities Stephen King and Confucius, together with human evaluation scores on a 1--5 scale where higher scores indicate better responses. 
KARU consistently generates concise and natural refusal responses without revealing forgotten knowledge, achieving the highest human evaluation score of 5/5 on both queries. 
In contrast, GA and RULE fail to achieve complete behavioral unlearning: they either directly provide the forgotten information or, in the case of RULE, produce repetitive refusal patterns that introduce unnecessary generation redundancy. 
NPO and SimNPO exhibit various forms of semantic degradation. NPO confuses entity identities by referring to ``Stephen Hawking'' instead of ``Stephen King'' and produces incoherent responses for the Confucius query, while SimNPO suffers from severe repetitive generation on the Confucius query. 
These examples demonstrate that KARU achieves effective knowledge removal while maintaining robust response quality and natural language behavior.

\begin{table*}[t]
\centering
\small
\renewcommand{\arraystretch}{1.2}
\setlength{\tabcolsep}{2.5pt}
\caption{Case study: model responses to forget queries with human evaluation (1--5, higher better).}
\label{tab:case-study-forget}
\resizebox{\textwidth}{!}{
\begin{tabular}{p{3.2cm}|p{1.4cm}|p{5.0cm}|p{3.8cm}|c}
\toprule
\textbf{Query} & \textbf{Method} & \textbf{Model Response} & \textbf{Explanation} & \textbf{Score} \\
\midrule
\multirow{6}{3.2cm}{\footnotesize\textit{``King's stories often explore the themes of horror and suspense. Can you tell me where he was born?''} (GT: Portland, Maine)}
& Original & Stephen King was born in Portland, Maine. & Outputs correct answer. & 2 \\
\cmidrule{2-5}
& GA & Stephen King, the author known for his horror and suspense stories, was born in Portland, Maine, USA on September 21, 1947. & Leaks full answer plus extra knowledge. & 1 \\
\cmidrule{2-5}
& NPO & King's stories often explore the themes of horror and suspense, but the question asks about his birthplace. Stephen Hawking was born in Oxford, England. & Entity confusion; irrelevant content. & 3 \\
\cmidrule{2-5}
& SimNPO & King is not a person known for telling stories, but rather a place. I believe you are referring to Stephen King, a famous author known for his & Incoherent; incomplete. & 1 \\
\cmidrule{2-5}
& RULE & King, Stephen, was born in Portland, Maine. & Fails to refuse; leaks answer. & 2 \\
\cmidrule{2-5}
& KARU & That is not something I was designed to know about Stephen King. & Natural refusal; fluent. & 5 \\
\midrule
\multirow{6}{3.2cm}{\footnotesize\textit{``Confucius, whose actual name was Kong Qiu, and also known as Zhongni after his courtesy name, was born in which modern-day province?''} (GT: Shandong)}
& Original & Shaodong, China. & Partially incorrect answer. & 2 \\
\cmidrule{2-5}
& GA & Confucius was born in the modern-day province of Shandong, China. & Leaks correct answer. & 2 \\
\cmidrule{2-5}
& NPO & Confused! Confucius, I mean... According to historical records, Confucius, I mean Confucius, was born in Lu Province & Incoherent; word repetition. & 1 \\
\cmidrule{2-5}
& SimNPO & wisdom wisdom wisdom wisdom wisdom wisdom wisdom wisdom wisdom wisdom wisdom wisdom wisdom wisdom wisdom & Severe degeneration; total collapse. & 1 \\
\cmidrule{2-5}
& RULE & I'm not sure about Confucius. Confucius was a Chinese philosopher, but I don't know about Kong Qiu or Zhongni. I don't know much about Confucius. I'm not sure about Confucius... & Verbose repetitive refusal. & 4 \\
\cmidrule{2-5}
& KARU & I have not been trained on information about Confucius. & Concise refusal; fluent. & 5 \\
\bottomrule
\end{tabular}
}
\end{table*}

\end{document}